\documentclass[a4,a4wide]{article}
\usepackage{aaai}
\usepackage{times}
\usepackage{helvet}
\usepackage{courier}
\setlength{\pdfpagewidth}{8.5in}
\setlength{\pdfpageheight}{11in}

\pdfinfo{
	%/Title (Extending acyclicity notions for existential rules with stable negation)
	/Title (Revisiting Chase Termination for Existential Rules and their Extension to Stable Negation)
	/Author ()
	/Keywords ()
}

% includes all necessary packages
% meta/packages.tex

\usepackage{txfonts} % fait gagner en espace par rapport ? times

%the better Times, see http://ctan.tug.org/tex-archive/info/Free_Math_Font_Survey/survey.html

\usepackage[french]{babel}
\usepackage[utf8]{inputenc}
\usepackage{listings}
\usepackage{lscape}
\usepackage{url}
\usepackage{enumerate}
\usepackage{amssymb}
\usepackage{tikz}
\usepackage{xcolor}
\usepackage[ruled,vlined]{algorithm2e}
\usepackage{array}

\usetikzlibrary{shapes}
\usetikzlibrary{arrows}

%\usepackage{aurical}

% includes theorems, lemmas, and so on
% theorems.tex

\newtheorem{theorem}{Theorem}

\newtheorem{proposition}{Proposition}

\newtheorem{definition}{Definition}
\newtheorem{example}{Example}

\newcommand{\fun}[1]{\ensuremath\mbox{\sl #1}}

\newenvironment{proof}{
	\noindent {\em Proof:}
}{
	\hfill$\square$
}

\newcounter{problem}

\newcolumntype{L}[1]{>{\raggedright\let\newline\\\arraybackslash\hspace{0pt}}m{#1}}
\newcolumntype{C}[1]{>{\centering\let\newline\\\arraybackslash\hspace{0pt}}m{#1}}
\newcolumntype{R}[1]{>{\raggedleft\let\newline\\\arraybackslash\hspace{0pt}}m{#1}}

% defines some environments
% environments.tex

%\newenvironment{texte}[0]
%{
%	{\Large\textquotedblleft}
%	\hspace{-0.5cm}
%	\begin{em}
%	%\Fontauri
%}
%{
%	\end{em}
%	\hspace{-0.4cm}
%	{\Large\textquotedblright}
%	\par
%}

% setups latex environment
%\input{src/meta/setup.tex}

% fol notations, ...
% graphik.tex

% global notations

\newcommand{\NP}[0]{NP}
\newcommand{\NPc}[0]{\NP-complete}
\newcommand{\coNP}[0]{co\NP}
\newcommand{\coNPc}[0]{co\NPc}

% confs
% (RR 2013)}

% Logic notations

% Knowledge base notations

\newcommand{\R}[0]{\mathcal R}

\newcommand{\unifier}[0]{\mu}
\newcommand{\unified}[0]{\diamond}

%{`#1\textrm'}

% Semantics notations

% rule classes

\newcommand{\gerd}[1]{#1^{U}}
\newcommand{\grd}[1]{#1^{D}}
\newcommand{\gerdc}[1]{#1^{U+}}

\newcommand{\wa}[0]{wa}

\newcommand{\fd}[0]{fd}

\newcommand{\ar}[0]{ar}

\newcommand{\agrd}[0]{a$-$grd}

\newcommand{\ja}[0]{ja}
\newcommand{\swa}[0]{swa}
\newcommand{\msa}[0]{msa}
\newcommand{\mfa}[0]{mfa}

% XPG
\newcommand{\pg}[0]{PG}
\newcommand{\ppg}[1]{\pg^{#1}}
\newcommand{\xpg}[0]{\ppg{X}}
\newcommand{\fpg}[0]{\ppg{F}}
\newcommand{\dpg}[0]{\ppg{D}}
\newcommand{\upg}[0]{\ppg{U}}
\newcommand{\PG}[1]{\pg(#1)}

\newcommand{\XPG}[1]{\xpg(#1)}
\newcommand{\FPG}[1]{\fpg(#1)}
\newcommand{\DPG}[1]{\dpg(#1)}
\newcommand{\UPG}[1]{\upg(#1)}

\newcommand{\vxpg}[4]{ $\guilsinglleft$#4,#3$\guilsinglright$ }
\newcommand{\vpos}[2]{\ensuremath{\vxpg{}{}{#2}{#1}}}

	%\fun{Mark}}}

% GERD

% others
\renewcommand{\t}[0]{\vec t}

\newcommand{\X}[0]{\vec X}
\newcommand{\Y}[0]{\vec Y}
\newcommand{\Z}[0]{\vec Z}

% non monotonic

%\newcommand{\rule}[0]{\body \rightarrow \head}

% meta/dependencies-graph.tex

%% some functions, and tikzstyles for drawing
%% gp, grd and gerd

%%%%% GERD

\newcounter{gerdRuleID}
\newcounter{gerdTermID}
\newcounter{gerdUnifID}
\newcounter{gerdNbBodyTerms}
\newcounter{gerdNbHeadTerms}

\newenvironment{gerd:figure}{
	\setcounter{gerdRuleID}{0}
	\setcounter{gerdUnifID}{0}
	\begin{tikzpicture}
}{
	\end{tikzpicture}
}

\tikzstyle{gerd:node}=[
]

\tikzstyle{gerd:fake}=[
	gerd:node,
	inner sep=0mm
	%circle,
	%draw=black
]

\tikzstyle{gerd:term}=[
	gerd:node,
	circle,
	minimum size=1mm,
	inner sep=0mm,
	draw=none
]

\tikzstyle{gerd:edge}=[
	->,
	draw=black
]

\tikzstyle{gerd:edge-norm}=[
	gerd:edge,
	%draw=blue!50!black
]

\tikzstyle{gerd:edge-spec}=[
	gerd:edge-norm
%	gerd:edge,
%	thick,
%	densely dotted,
%	draw=red!50!black
]

\tikzstyle{gerd:edge-unif}=[
	gerd:edge,
	%draw=green!50!black,
	%dashed,
	%thick
]

\tikzstyle{gerd:border}=[
	-,
	draw=black
]

\tikzstyle{gerd:border-ext}=[
	gerd:border,
	thick
]

\tikzstyle{gerd:border-int}=[
	gerd:border,
	draw=black!50!white,
	dashed
]

%%%
%% WARNING/ the reversse mode does not work anymore
%% should let the west/east variable

% 0: [1|0] reverse mode
% 1: X
% 2: Y
% 3: Xmax
% 4: Ymax
% 5: list of body terms
% 6: list of head terms
% 7: list of fr terms
% 8: list of exist terms
\newcounter{gerdTmpCounter}
\newcounter{gerdMaxTermCounter}
\newcommand{\gerdRule}[9][0]{
	\refstepcounter{gerdRuleID}

	% preprocessing
	\setcounter{gerdNbBodyTerms}{0}
	\setcounter{gerdNbHeadTerms}{0}
	\foreach \tb in {#6} { \refstepcounter{gerdNbBodyTerms} }
	\foreach \th in {#7} { \refstepcounter{gerdNbHeadTerms} }

	% fake nodes
	\foreach \i [evaluate=\i as \x using (((#4-#2)/4)*\i)+#2] in {0, 1, 2, 3, 4} {
		\node[gerd:fake] (fakeT\thegerdRuleID_\i) at (\x,#3) {};
	}
	\foreach \i [evaluate=\i as \x using (((#4-#2)/4)*\i)+#2, evaluate=\i as \y using #3+#3-#5] in {0, 1, 2, 3, 4} {
		\node[gerd:fake] (fakeB\thegerdRuleID_\i) at (\x,\y) {};
	}

	% draw borders
	\draw[gerd:border-ext] (fakeT\thegerdRuleID_0.center) -- (fakeT\thegerdRuleID_4.center)
						-- (fakeB\thegerdRuleID_4.center) -- (fakeB\thegerdRuleID_0.center)
						-- (fakeT\thegerdRuleID_0.center);
	\draw[gerd:border-int] (fakeT\thegerdRuleID_2.center) -- (fakeB\thegerdRuleID_2.center);

	% compute body terms
	\ifthenelse{\equal{#1}{1}}{
		\pgfmathparse{#2+(3*((#4-#2)/4))}
	}{
		\pgfmathparse{#2+((#4-#2)/4)}
	}
	\setcounter{gerdTermID}{1}
	\foreach \pp/\pt/\ppl [evaluate=\pp as \x using \pgfmathresult,%#2+((#4-#2)/4),
				  	  evaluate=\pp as \y using #3-(((#5-#3)/(\thegerdNbBodyTerms+1))*\thegerdTermID)]
			in {#6} {	% for each body term
				\setcounter{gerdTmpCounter}{1}
				\setcounter{gerdMaxTermCounter}{0}
				\foreach \ptt in \pt { \refstepcounter{gerdMaxTermCounter} }
		\node[gerd:term]
			(tb\thegerdRuleID_\pp{\pt}\ppl) at (\x,\y) {$\pp(
				\foreach \ptt in \pt {
					\ifthenelse{\equal{\thegerdTmpCounter}{\ppl}}{{\underline{\bf \ptt}}}{\ptt}
					\ifthenelse{\equal{\thegerdTmpCounter}{\thegerdMaxTermCounter}}{}{,}
					\refstepcounter{gerdTmpCounter}
					%\ptt
				}
				)$};
		%\node[gerd:term]%, label=$\vpos{\pp}{\ppl}$] 
			%(tb\thegerdRuleID_\pp\ppl) at (\x,\y) {$\vpos{\pp}{\ppl}$};% TODO label north west
		\refstepcounter{gerdTermID}
	}

	% compute head terms
	\ifthenelse{\equal{#1}{1}}{
		\pgfmathparse{#2+((#4-#2)/4)}
	}{
		\pgfmathparse{#2+(3*((#4-#2)/4))}
	}
	\setcounter{gerdTermID}{1}
	\foreach \pp/\pt/\ppl [evaluate=\pp as \x using \pgfmathresult,
				  evaluate=\pp as \y using #3-(((#5-#3)/(\thegerdNbHeadTerms+1))*\thegerdTermID)]
			in {#7} {	% for each head term
				\setcounter{gerdTmpCounter}{1}
				\setcounter{gerdMaxTermCounter}{0}
				\foreach \ptt in \pt { \refstepcounter{gerdMaxTermCounter} }
		\node[gerd:term]
			(th\thegerdRuleID_\pp{\pt}\ppl) at (\x,\y) {$\pp(
				\foreach \ptt in \pt {
					\ifthenelse{\equal{\thegerdTmpCounter}{\ppl}}{{\underline{\bf \ptt}}}{\ptt}
					\ifthenelse{\equal{\thegerdTmpCounter}{\thegerdMaxTermCounter}}{}{,}
					\refstepcounter{gerdTmpCounter}
					%\ptt
				}
				)$};
		\refstepcounter{gerdTermID}
	}

	% compute edges
	\foreach \ppa/\ppb in {#8} {
		\draw[gerd:edge-norm] (tb\thegerdRuleID_\ppa.east) -- (th\thegerdRuleID_\ppb.west);
		\foreach \te in {#9} {
			\draw[gerd:edge-spec] (tb\thegerdRuleID_\ppa.east) -- (th\thegerdRuleID_\te.west);
		}
	}
}

\newcommand{\gerdUnif}[5][]{
	\foreach \th/\tb in {#4} {
		\draw[gerd:edge-unif,#1] (th#2_\th) to node[above]{#5} (tb#3_\tb);
	}
}

%%%% GP

\newenvironment{gp:figure}{
	\begin{tikzpicture}
}{
	\end{tikzpicture}
}

\tikzstyle{gp:node}=[
	gerd:node
]

\tikzstyle{gp:fake}=[
	gp:node,
	gerd:fake
]

\tikzstyle{gp:predpos}=[
	gp:node,
	rectangle,
	minimum size=1mm,
	draw=black
]

\tikzstyle{gp:edge}=[
	gerd:edge
]

\tikzstyle{gp:edge-norm}=[
	gp:edge,
	gerd:edge-norm
]

\tikzstyle{gp:edge-spec}=[
	gp:edge,
	gerd:edge-spec
]

\tikzstyle{fake}=[
	inner sep=0mm
]

\tikzstyle{rc:node}=[
]

\tikzstyle{rc:edge}=[
	-,
	draw=black
]

\tikzstyle{rc:complexity-edge}=[
	-,
	thick,
	densely dotted,
	draw
]

\tikzstyle{rc:P-edge}=[
	rc:complexity-edge,
	draw=green!66!black
]

\tikzstyle{rc:NP-edge}=[
	rc:complexity-edge,
	draw=blue!66!black
]

\tikzstyle{rc:ET-edge}=[
	rc:complexity-edge,
	draw=orange!66!black
]

\tikzstyle{rc:ETT-edge}=[
	rc:complexity-edge,
	draw=red!66!black
]

\tikzstyle{rc:complexity-label}=[
]

\title {Revisiting Chase Termination for Existential Rules \\
        and their Extension to Nonmonotonic Negation}
%\title{Extending acyclicity notions for existential rules with stable negation}
\author{ Jean-Fran\c{c}ois Baget \\ {\small INRIA}
\And Fabien Garreau \\ {\small University of Angers}
\And Marie-Laure Mugnier \\ {\small University of Montpellier}
\And Swan Rocher \\ {\small University of Montpellier}}

\begin{document}
\nocopyright
% content.tex
% includes all necessary files

% title.tex

\maketitle

%%%%%%%%%%%%%%% ABSTRACT %%%%%%%%%%%%%%%%%%%%%%%%%%%%%%%%%%%%%%%%%%%%%%%%%%%%%%%%%%%%%%%%%%%%%%%%%%%%%%%%%%%%%%
\begin{abstract}
Existential rules have been proposed for representing ontological knowledge,
specifically in the context of Ontology- Based Data Access. Entailment with existential rules is undecidable.
We focus in this paper on conditions that ensure the termination
of a breadth-first forward chaining algorithm known as the chase. Several variants of the chase have been proposed.
In the first part of this paper, we propose a new tool that allows to extend existing acyclicity conditions ensuring chase termination, while keeping
good complexity properties. In the second part, we study the extension to existential rules with nonmonotonic negation under stable
model semantics, discuss the relevancy of the chase variants for these rules and further extend acyclicity results
obtained in the positive case.

\end{abstract}

%%%%%%%%%%%%%%%%%%%%%% INTRO %%%%%%%%%%%%%%%%%%%%%%%%%%%%%%%%%%%%%%%%%%%%%%%%%%%%%%%%%%%%%%%%%%%%%%%%%%%%%%%%%%%
\section{Introduction}
\emph{Existential rules} (also called Datalog+/-) have been proposed for
representing ontological knowledge, specifically in the context of
Ontology-Based Data Access, that aims to exploit ontological knowledge when
accessing data \cite{cali09,blms09}. These rules allow to assert the
existence of unknown individuals, a feature recognized as crucial for
representing knowledge in an open domain perspective. Existential rules
generalize lightweight description logics, such as DL-Lite and $\mathcal{EL}$
\cite{dl-lite07,baader-b-l05} and overcome some of their limitations by
allowing any predicate arity as well as cyclic structures.

%\paragraph{Decidability and Chase Termination}
Entailment  with existential rules is known to be undecidable
\cite{beeri-vardi81,chandra-lewis-makowsky81}. Many sufficient conditions for
decidability, obtained by syntactic restrictions, have been exhibited in
knowledge representation and database theory (see e.g., the overview in
\cite{rr-11-m}).  We focus in this paper on conditions that ensure the
termination of a breadth-first forward chaining algorithm, known as the
\emph{chase} in the database literature. Given a knowledge base composed of
data and existential rules, the chase saturates the data by application of
the rules. When it is ensured to terminate, inferences enabled by the rules
can be  materialized in the data, which can then be queried like a classical
database, thus allowing to benefit from any database optimizations technique.
  Several variants of the
chase have been proposed, which differ in the way they deal with redundant
information \cite{fagin-kolaitis-al05,deutsch08,marnette09}. It follows that
they do not behave in the same way with respect to termination.
%They can be totally ordered with respect to the classes of rules for which their halting is guaranteed.
In the following, when we write \emph{the} chase, we mean one of these variants. Various acyclicity notions have been proposed to ensure
the halting of some chase variants.

Nonmonotonic extensions to existential rules were recently considered in
\cite{cali2009stratified} with stratified negation, \cite{gottlob2012wf} with
well-founded semantics and \cite{magka2013computing} with stable model
semantics. This latter work studies skolemized existential rules  and focuses
on cases where a finite unique model exists.

In this paper, we tackle the following issues:  Can we still extend known acyclicity notions ? Would any chase variant be applicable to existential rules provided with nonmonotonic negation, a useful feature for ontological modeling?

\emph{1. Extending acyclicity notions.} Acyclicity conditions can be classified into two main families: the first one constrains the way existential variables are propagated during the chase (e.g.  \cite{fagin03,fagin-kolaitis-al05,marnette09,kr11}) and the second one encodes dependencies between rules, i.e., the fact that a rule may lead to trigger another rule (e.g. \cite{baget04,deutsch08,blms11}). These conditions are based on different graphs, but all of them can be seen as can  as forbidding ``dangerous'' cycles in the considered graph. We define a new family of graphs that allows to extend these acyclicity notions, while keeping good complexity properties.

\emph{2. Processing rules with nonmonotonic negation. }
 We define a notion of stable models on nonmonotonic existential rules and provide
a derivation algorithm  that instantiate rules ``on the fly''
\cite{asperix09,dao2012omiga}. This algorithm is parametrized by a chase
variant. We point out that, differently to the positive case, not all
variants of the chase lead to sound procedures in presence of nonmonotonic
negation; furthermore, skolemizing existential variables or not makes a
semantic difference, even when both computations terminate. Finally, we
further extend acyclicity results obtained on positive rules by exploiting
negative information as well.

A technical report with the proofs omitted for space restriction reasons is available \url{http://www2.lirmm.fr/~baget/publications/nmr2014-long.pdf}.

%%%%%%%%%%%%%%%%%%%%%%%%% PRELIMINARIES %%%%%%%%%%%%%%%%%%%%%%%%%%%%%%%%%%%%%%%%%%%%%%%%%%%%%%%%%%%%%%%%%%%%%%%%%%%%%%%%%%%%%%%%%%%%%
\section{Preliminaries}
\label{sec:preliminaries}
\subsubsection{Atomsets}
% POUR FAIRE PLUS COURT
We consider first-order vocabularies with constants but no other function symbols. An \emph{atom} is of the form $p(t_1, \ldots, t_k)$ where $p$ is a predicate of arity $k$ and the $t_i$ are terms, i.e., variables or constants (in the paper we denote constants by $a, b, c, ...$ and variables by $x, y, z, ...$). An \emph{atomset}  is a set of atoms. Unless indicated otherwise, we will always consider \emph{finite} atomsets. If $F$ is an atom or an atomset, we write $\fun{terms}(F)$ (resp. $\fun{vars}(F)$, resp. $\fun{csts}(F)$) the set of terms (resp. variables, resp. constants) that occur in $F$.
%We consider a \emph{vocabulary} $\mathcal V = (\mathcal P, \mathcal C)$, where $\mathcal P$ is a finite set of \emph{predicate names} (each provided with an \emph{arity}), and $\mathcal C$ is a disjoint finite set of \emph{constants}. We also consider a countable disjoint set $\mathcal X$ of \emph{variables}. \emph{Terms} are elements of $\mathcal T = \mathcal C \cup \mathcal X$.
%An \emph{atom} on $\mathcal V$ is of the form $p(t_1, \ldots, t_k)$ where $p$ is a predicate name of $\mathcal P$ of arity $k$ and the $t_i$ are terms of $\mathcal T$. An \emph{atomset} on $\mathcal V$ is a finite set of atoms on $\mathcal V$. If $F$ is an atom or an atomset, we note $\fun{terms}(F)$ (resp. $\fun{vars}(F)$, resp. $\fun{csts}(F)$) the set of terms (resp. variables, resp. constants) that appear in $F$.
 If $F$ is an atomset, we write $\phi(F)$ the formula obtained by the conjunction of all atoms in $F$, and $\Phi(F)$ the existential closure of $\phi(F)$. We say that an atomset $F$ \emph{entails} an atomset $Q$ (notation $F \models Q$) if $\Phi(F) \models \Phi(Q)$. It is well-known that $F \models Q$ iff there exists a \emph{homomorphism} from $Q$ to $F$, i.e., a \emph{substitution} $\sigma: \fun{vars}(F) \rightarrow \fun{terms}(Q)$ such that $\sigma(Q) \subseteq F$. Two atomsets $F$ and $F'$ are said to be \emph{equivalent} if $F \models F'$ and $F' \models F$.
%\begin{theorem}[Homomorphism Theorem] Let $F$ and $Q$ be two atomsets on a vocabulary $\mathcal V$. Then $F \models Q$ if and only if there exists a \emph{homomorphism} from $Q$ to $F$, {\it i.e.} a \emph{substitution} $\sigma: \fun{vars}(F) \rightarrow \fun{terms}(Q)$ such that $\sigma(Q) \subseteq F$.
%\end{theorem}
%Deciding if there exists a homomorphism from $Q$ to $F$ is a NP-complete problem (combined complexity). This problem is in AC0 in data complexity.
If there is a homomorphism $\sigma$ from an atomset $F$ to itself (i.e., an \emph{endomorphism} of $F$) then $F$ and $\sigma(F)$ are equivalent.
An atomset $F$ is a \emph{core} if there is no homomorphism from $F$ to one of its strict subsets. Among all atomsets equivalent to an atomset $F$, there exists a unique core (up to isomorphism). We call this atomset \emph{the} core of $F$.
%Deciding if an atomset is a core is a co-NP-complete problem.

\subsubsection{Existential Rules}

An \emph{existential rule} (and simply a rule hereafter) is of the form $B \rightarrow H$, where $B$ and $H$ are atomsets, respectively called the \emph{body} and  the  \emph{head} of the rule.
To an existential rule $R : B \rightarrow H$ we assign a formula $\Phi(R) = \forall \vec{x}\forall \vec{y} (\phi(B) \rightarrow \exists \vec{z} \phi(H))$, where $\fun{vars}(B) = \vec{x} \cup \vec{y}$, and $\fun{vars}(H) = \vec{x} \cup \vec{z}$. Variables $\vec{x}$, which appear in both $B$ and $H$, are called \emph{frontier variables}, while variables $\vec{z}$, which appear only in $H$ are called \emph{existential variables}. E.g.,  $\Phi(b(x,y) \rightarrow h(x,z)) = \forall x \forall y(b(x,y) \rightarrow \exists z h(x,z))$. The presence of existential variables in rule heads is the distinguishing feature of existential rules.

 A \emph{knowledge base} is a pair $K = (F, \mathcal R)$ where $F$ is an atomset (the set of facts) and $\mathcal R$ is a finite set of existential rules. We say that $K = (F, \{R_1, \ldots, R_k\})$ \emph{entails} an atomset $Q$ (notation $K \models Q$) if $\Phi(F), \Phi(R_1), \ldots, \Phi(R_k) \models \Phi(Q)$. The fundamental problem we consider, denoted by {\sc entailment}, is the following: given a knowledge base $K$ and an atomset $Q$, is it true that $K \models Q$? When $\Phi(Q)$ is seen as a Boolean conjunctive query, this problem is exactly the problem of determining if $K$ yields a positive answer to this query.

%Several forward chaining algorithms, called \emph{chase variants} hereafter,  have been proposed to compute existential rule entailment. They all rely upon the notion of \emph{rule application}.

A rule $R : B \rightarrow H$ is \emph{applicable} to an atomset $F$ if there is a homomorphism $\pi$ from $B$ to $F$. Then the \emph{application of $R$ to $F$ according to $\pi$} produces an atomset $\alpha(F, R, \pi) = F \cup \pi(\fun{safe}(H))$, where $\fun{safe}(H)$ is obtained from $H$ by replacing existential variables with fresh ones.  An $\mathcal R$-derivation from $F$ is a (possibly infinite) sequence $F_0 = \sigma_0(F), \ldots, \sigma_k(F_k), \ldots$ of atomsets such that $\forall 0 \leq i$, $\sigma_i$ is an endomorphism of $F_i$ (that will be used to remove redundancy in $F_i$) and $\forall 0 < i$, there is  a rule $(R: B \rightarrow H) \in \mathcal R$ and a homomorphism $\pi_i$ from $B$ to $\sigma_i(F_{i-1})$ such that $F_i = \alpha(\sigma_i(F_{i-1}),R, \pi_i)$.

\begin{example} Consider the existential rule $R = human(x) \rightarrow hasParent(x, y), human(y)$; and the atomset $F = \{human(a)\}$. The application of $R$ to $F$ produces an atomset $F' = F \cup \{hasParent(x, y_0), human(y_0)\}$ where $y_0$ is a fresh variable denoting an unknown individual. Note that $R$ could be applied again to $F'$ (mapping $x$ to $y_0$), which would create another existential variable and so on.
\end{example}

%If an $\mathcal R$-derivation $F = F_0, \ldots, F_k$ from $F$ is finite, we say that it is an $\mathcal R$-\emph{derivation} from $F$ \emph{to} $F_k$.
A finite $\mathcal R$-derivation $F_0, \ldots, F_k$ from $F$  is said to be \emph{from} $F$ \emph{to} $F_k$. Given a knowledge base $K = (F, \mathcal R)$, $K \models Q$ iff there exists a finite $\mathcal R$-derivation from $F$ to $F'$ such that $F' \models Q$ \cite{blms11}.

%\begin{theorem}(e.g. AIJ 2012]) Let $F$ and $Q$ be two atomsets and $\mathcal R$ be a set of existential rules. Then $F, \mathcal R \models Q$ if and only if there exists a  finite $\mathcal R$-derivation from $F$ to $F'$ such that $F' \models Q$.
%\end{theorem}
%% X Me semble inutile X %% Note that this result holds independently from the chosen endomorphisms, in particular if $\sigma_i(F_i) = F_i$ for all $i$.

%\begin{theorem} Let $F$ and $Q$ be two atomsets on a vocabulary $\mathcal V$, and $\mathcal R$ be a set of existential rules on $\mathcal V$. Then $F, \mathcal R \models Q$ if and only if there exists an $\mathcal R$-derivation from $F$ to $F'$ such that $F' \models Q$.
%\end{theorem}

%\textbf{Mettre ici les notions de homomorphisme new et useful ? Utile ci-dessous pour dependance, puis pour restricted chase, puis pour propagation.
%DONE}

Let $R_i$ and $R_j$ be rules, and $F$ be an atomset such that $R_i$ is applicable to $F$ by a homomorphism $\pi$;  a homomorphism $\pi'$ from $B_j$ to $F' = \alpha(F, R_i, \pi)$ is said to be \emph{new} if $\pi'(B_j)  \nsubseteq F$.
Given a rule $R = B \rightarrow H$, a homomorphism $\pi$ from $B$ to $F$ is said to be \emph{useful} if it cannot be extended to a homomorphism from $B \cup H$ to $F$; if $\pi$ is not useful then $\alpha(F, R, \pi)$ is equivalent to $F$, but this is not a necessary condition for $\alpha(F, R, \pi)$ to be equivalent to $F$.

%%%%%%%%%%%%%%%%%%%%%%%%%%%%%%%% CHASE TERMINATION %%%%%%%%%%%%%%%%%%%%%%%%%%%%%%%%%%%%%%%%%%%%%%%%%%%%%%%
\section{Chase Termination}
\label{sec:chase}
 An algorithm that computes an $\mathcal R$-derivation by exploring all possible rule applications in a breadth-first manner is called a \emph{chase}.
 In the following, we will also call chase the derivation it computes. Different kinds of chase can be defined by using different properties to
 compute $F'_i = \sigma_i(F_i)$ in the derivation (hereafter we write $F'_i$ for $\sigma_i(F_i)$  when there is no ambiguity). All these algorithms are sound and complete w.r.t. the {\sc entailment} problem in the sense that $(F, \mathcal R) \models Q$ iff they provide in finite (but unbounded) time a finite $\mathcal R$-derivation from $F$ to $F_k$ such that $F_k \models Q$.

\subsubsection{Different kinds of chase}

In the \emph{oblivious chase}  (also called naive chase), e.g., \cite{cali-gottlob-kifer08},
a rule $R$ is applied according to
a homomorphism $\pi$ only if it has not already been applied according to the same homomorphism.  Let $F_i =
\alpha(F'_{i-1}, R, \pi)$, then $F'_i = F'_{i-1}$ if $R$ was previously applied according to $\pi$, otherwise $F'_i =
F_i$.
This can be slightly improved. Two applications $\pi$ and $\pi'$ of the same rule add the same atoms if they map frontier variables identically (for any frontier variable $x$ of $R$, $\pi(x) = \pi'(x)$); we say that they are frontier-equal. In the \emph{frontier chase}, let $F_i = \alpha(F'_{i-1}, R, \pi)$, we take $F'_i = F'_{i-1}$ if $R$ was previously applied according to some $\pi'$ frontier-equal to $\pi$, otherwise $F'_i = F_i$.
 The \emph{skolem chase} \cite{marnette09} relies on a skolemisation of the rules: a rule $R$ is transformed into a rule \emph{skolem}($R$) by replacing each occurrence of an existential variable $y$ with a functional term $f^R_y(\vec{x})$, where $\vec x$ are the frontier variables of $R$. Then the oblivious chase is run on skolemized rules.
It can easily be checked that frontier chase and skolem chase yield isomorphic results, in the sense that they generate exactly the same atomsets, up to a bijective renaming of variables by skolem terms.

 The \emph{restricted chase} (also called standard chase) \cite{fagin-kolaitis-al05} detects a kind of local redundancy. Let $F_i = \alpha(F'_{i-1}, R, \pi)$, then $F'_i = F_i$ if $\pi$ is useful, otherwise $F'_i = F'_{i-1}$.
%A slight improvement is the \emph{piece-restricted chase}. Let $F_i = \alpha(F'_{i-1}, B \rightarrow H, \pi)$. Let $P$ be the maximal subset of $H$ such that $\alpha(F'_{i-1}, B \rightarrow P, \pi)$ is not useful. Then we take $F'_i =  \alpha(F'_{i-1}, B \rightarrow (H \setminus P), \pi)$.
%
 The \emph{core chase} \cite{deutsch08} considers the strongest possible form of redundancy: for any $F_i$, $F'_i$ is the core of $F_i$.

%A chase is said to be \emph{local} when $\forall 1 \leq i $, with $F_i = F'_{i-1} \cup H'$, where $H' = \pi(\fun{safe}(H))$, there exists a homomorphism $\sigma': H' \rightarrow F'_{i-1}$ such that $\sigma(F_i) = F'_{i-1} \cup \sigma'(H')$. All the chase variants presented above are local,  \emph{except for the core chase}. A chase is local iff $\forall i \leq j$, $F'_i \subseteq F'_j$: this property will be critical for processing nonmonotonic existential rules.

A chase is said to be \emph{local} if $\forall i \leq j$, $F'_i \subseteq F'_j$. All chase variants presented above are local,  \emph{except for the core chase}. This property will be critical for nonmonotonic existential rules.

\subsubsection{Chase termination}

Since {\sc entailment} is undecidable, the chase may not halt. We call \emph{$C$-chase} a chase relying on some criterion $C$ to generate $\sigma(F_i) = F'_i $. So $C$ can be oblivious, skolem, restricted, core or any other criterion that ensures the equivalence between $F_i$ and $F'_i$.
 A $C$-chase generates a possibly infinite $\mathcal R$-derivation $\sigma_0(F), \sigma_1(F_1), \ldots, \sigma_k(F_k), \ldots$

We say that this derivation \emph{produces} the (possibly infinite) atomset $(F, \mathcal R)^C = \cup_{0 \leq i \leq \infty} \sigma_i(F_i) \setminus \cup_{0 \leq i \leq \infty} \overline{(\sigma_i(F_i))}$, where $\overline{(\sigma_i(F_i))} = F_i \setminus \sigma(F_i)$. Note that this produced atomset is usually defined as the infinite union of the  $\sigma_i(F_i)$. Both definitions are equivalent when the criterion $C$ is \emph{local}. But the usual definition would produce too big an atomset with a non-local chase such as the core chase: an atom generated at step $i$ and removed at step $j$ would still be present in the infinite union.
 We say that a (possibly infinite) derivation obtained by the $C$-chase is \emph{complete} when any further rule application on that derivation would produce the same atomset. A complete derivation obtained by any $C$-chase produces a \emph{universal model} (i.e., most general) of $(F, \mathcal R)$: for any atomset $Q$, we have $F, \mathcal R \models Q$ iff $(F, \mathcal R)^C \models Q$.

We say that the $C$-chase \emph{halts} on $(F, \mathcal R)$ when the $C$-chase generates a finite complete $\mathcal R$-derivation from $F$ to $F_k$.
Then $(F, \mathcal R)^C = \sigma_k(F_k)$ is a finite universal model.  %% PAS UTILE (SI PB PLACE)
We say that $\mathcal R$ is \emph{universally $C$-terminating} when the $C$-chase halts on $(F, \mathcal R)$ for any atomset $F$. We call \emph{$C$-finite} the class of universally $C$-terminating sets of rules. It is well known that the chase variants do not behave in the same way w.r.t. termination. The following examples highlight these different behaviors.

\begin{example} [Oblivious / Skolem chase]
Let $R = p(x,y) \rightarrow p(x,z)$ and $F = \{p(a,b)\}$. The oblivious chase does not halt: it adds $p(a,z_0)$, $p(a,z_1)$, etc. %The frontier chase adds $p(a,z_0)$ then stops.
The skolem chase considers the rule $p(x,y) \rightarrow p(x,f^R_z(x))$; it adds $p(a, f^R_y(a))$ then halts.
\end{example}

\begin{example}[Skolem / Restricted chase]\label{ex-fes}
Let $R : p(x) \rightarrow r(x,y), r(y,y), p(y)$ and $F = \{p(a)\}$. The skolem chase does not halt: at Step 1, it maps $x$ to $a$ and adds $r(a, f^R_y(a))$, $r(f^R_y(a),f^R_y(a))$ and $p(f^R_y(a))$; at step 2, it maps $x$ to $f^R_y(a)$ and adds $r(f^R_y(a), f^R_y(f^R_y(a)))$, etc. The restricted chase performs a single rule application, which adds $r(a, y_0)$, $r(y_0,y_0)$ and $p(y_0)$; indeed, the rule application that maps $x$ to $y_0$ yields only redundant atoms w.r.t. $r(y_0,y_0)$ and $p(y_0)$.
\end{example}

\begin{example} [Restricted / Core chase]
Let $F = s(a)$, $R_1 = s(x) \rightarrow p(x,x_1),  p(x,x_2), r(x_2,x_2)$, $R_2 = p(x,y) \rightarrow q(y)$ and $R_3 = q(x) \rightarrow r(x,y), q(y)$. Note that $R_1$ creates redundancy and $R_3$ could be applied indefinitely if it were the only rule. $R_1$ is the first applied rule, which creates new variables, called $x_1$ and $x_2$ for simplicity. The restricted chase does not halt: $R_3$ is not applied on $x_2$ because it is already satisfied at this point, but it is applied on $x_1$, which creates an infinite chain. The core chase applies $R_1$,  computes the core of the result, which removes $p(a,x_1)$, then halts.
\end{example}

%How to compare the different chases?
It is natural to consider the oblivious chase as the weakest form of chase
%(without the oblivious criterion, any rule having an existential variable would generate an infinite number of instantiations of that variable),
and necessary to consider the core chase as the strongest form of chase (since the core is the minimal representative of its equivalence class).
 We say that a criterion $C$ is \emph{stronger} than $C'$ and write $C \geq C'$ when $C'$-finite $\subseteq$ $C$-finite. We say that $C$ is
\emph{strictly stronger} than $C'$ (and write $C > C'$) when $C \geq C'$ and $C' \not\geq C$.

It is well-known that core $>$ restricted $>$ skolem $>$ oblivious.
%Moreover, the frontier chase and the skolem chase halt on the same instances:
%$\pi$ maps the frontier of $R$ in a new way and produces a new atom in the frontier chase iff
% $\alpha(F, \fun{skolem}(R), \pi)$ contains a new atom.
%Thus skolem $=$ frontier.
% One can easily check that core $>$ piece-restricted $>$ restricted.
%
%It is immediate to check that
%core $\geq$ piece-restricted $\geq$ restricted. These comparisons are strict since (1) the piece-restricted chase is local and the core chase is not, and (2)
%the restricted chase does not halt on $(\{p(a, b)\}, \{p(x, y) \rightarrow p(y, x), r(x, y)\}$ \textbf{PB}, but the piece-restricted chase does.
%
%Note that the frontier chase does not fit nicely into this framework: when we consider than $X$ is stronger than $Y$, we consider the same set of rules %$\mathcal R$, whereas the frontier-chase considers a skolemization of $\mathcal R$. However, we can easily check that the frontier chase and the skolem %chase produce isomorphic results: $\pi$ maps the frontier of $R$ in a new way if and only if $\alpha(F, \fun{skolem}(R), \pi)$ contains a new atom. %Then frontier-finite and skolem-finite are the same class.
%%
%%{\bf JUSQUE LA, AVEC UN PEU DE REECRITURE APRES}
%%
An immediate remark is that core-finite corresponds to \emph{finite expansion sets} \emph{(fes)} defined in
\cite{baget-mugnier02}.
%In turn, \emph{fes} correspond to rules enjoying the \emph{bounded derivation depth property} (BDDP) introduced in
%\cite{cali09} (see \cite{baget13} for a proof).
 To sum up, the following inclusions hold between $C$-finite classes: oblivious-finite $\subset$ skolem-finite = frontier-finite $\subset$ restricted-finite
$\subset$ core-finite = fes.

%%%%%%%%%%%%%%%%%%%%%%%%%%%%%%%%%% KNOWN ACYCLICITY NOTIONS %%%%%%%%%%%%%%%%%%%%%%%%%%%%%%%%%%%%%%%%%%
\section{Known Acyclicity Notions}
\label{sec:review}
We can only give a brief overview of known  acylicity notions, which should however allow to place our contribution
within the existing landscape. A comprehensive taxonomy can be found in \cite{grau2013acyclicity}.

Acyclicity notions ensuring that some chase variant terminates can be divided into two main families, each of them relying on a different graph: a ``position-based'' approach which basically relies on a graph encoding variable sharing between positions in predicates and a ``rule dependency approach'' which relies on a graph encoding dependencies between rules, i.e., the fact that a rule may lead to trigger another rule (or itself).

\subsubsection{Position-based approach}
%%%%%%%

In the position-based approach, cycles identified as dangerous are those passing through positions that may contain
existential variables; intuitively, such a cycle means that the creation of an existential variable in a given
position may lead to create another existential variable in the same position, hence an infinite number of existential
variables. Acyclicity is then defined by the absence of dangerous cycles.  The simplest notion of acyclicity in this
family is that of  \emph{weak acyclicity} \emph{(wa)} \cite{fagin03} \cite{fagin-kolaitis-al05}, which has been widely used
in databases. It relies on a directed graph whose nodes are the positions in predicates (we denote by $(p,i)$ the position $i$ in predicate $p$).
 Then, for each rule $R: B \rightarrow H$ and each variable $x$ in $B$ occurring in position $(p,i)$, edges with origin $(p,i)$ are built as follows:
 if $x$ is a frontier variable, there is an edge from $(p,i)$ to each position of $x$ in $H$;
 furthermore, for each existential variable $y$ in $H$ occurring in position $(q,j)$, there is a special edge from $(p,i)$ to $(q,j)$.
 A set of rules is weakly acyclic if its associated graph has no cycle passing through a special edge.

 \begin{example}[Weak-acyclicity]\label{ex-wa}
Let $R_1 = h(x) \rightarrow p(x,y)$, where $y$ is an existential variable, and
$R_2 =  p(u,v), q(v) \rightarrow h(v)$. The position graph of $\{R_1, R_2\}$ contains a special edge from $(h,1)$ to $(p,2)$ due to $R_1$ and an edge from $(p,2)$ to $(h,1)$ due to $R_2$, thus $\{R_1, R_2\}$ is not wa.
\end{example}

 Weak-acyclicity  has been generalized, mainly by shifting the focus from positions to existential variables
(\emph{joint-acyclicity} \emph{(ja)}\cite{kr11}) or to positions in atoms instead of predicates
(\emph{super-weak-acyclicity} \emph{(swa)} \cite{marnette09}). Other related notions can be imported from logic programming,
e.g., \emph{finite domain} \emph{(fd)} \cite{fd2008}  and \emph{ argument-restricted} \emph{(ar)} \cite{ar2009}.
See the first column in
Figure \ref{fig:gen}, which shows the inclusions between the corresponding classes of rules (all these inclusions are known to be strict).

\subsubsection{Rule Dependency}
%%%%%%%%

In the second approach, the aim is to avoid cyclic triggering of rules \cite{baget04,blms09,deutsch08,grau2012acyclicity}.
 We say that a rule $R_2$ \emph{depends} on a rule $R_1$ if there exists an atomset $F$ such that $R_1$ is applicable to $F$ according to a homomorphism $\pi$ and $R_2$ is applicable to $F' = \alpha(F, R_1, \pi)$ according to a new useful homomorphism.
% moreover, $\pi'$ cannot be extended to a homomorphism from $H_2$ to $F'$ (i.e., $\pi'$ is useful).
This abstract dependency relation can be effectively computed with a unification operation known as piece-unifier
\cite{blms09}.  Piece-unification  takes existential variables into account, hence is more complex than the usual
unification between atoms. A \emph{piece-unifier} %(simply called \emph{unifier} hereafter)
of a rule body $B_2$ with a
rule head $H_1$ is a substitution $\mu$ of $\fun{vars}(B'_2) \cup \fun{vars}(H'_1)$, where $B'_2 \subseteq B_2$ and $H'_1
\subseteq H_1$, such that \emph{(1) }$\mu(B'_2) = \mu(H'_1)$ and \emph{(2)} existential variables in $H'_1$ are not
unified with separating variables of $B'_2$, i.e., variables that occur both in $B'_2$ and in $(B_2 \setminus B'_2)$; in
other words, if a variable $x$ occuring in $B'_2$ is unified with an existential variable $y$ in $H'_1$, then all atoms
in which $x$ occurs also belong to $B'_2$. It holds that $R_2$ depends on $R_1$ iff there is a piece-unifier of $B_2$ with
$H_1$ satisfying easy to check additional conditions (atom erasing \cite{blms11} and usefulness
\cite{grau2013acyclicity}).
%%% Ajouter lamare12 dans VF

\begin{example}[Rule dependency]\label{ex-dependency}
Consider the rules from Example  \ref{ex-wa}. There is no piece-unifier of $B_2$ with $H_1$. The substitution $\mu = \{(u,x), (v,y)\}$, with $B'_2 = p(u,v)$  and $H'_1 = H_1$, is not a piece-unifier because $v$ is unified with an existential variable, whereas it is a separating variable of $B'_2$ (thus, $q(v)$ should be included in $B'_2$, which is impossible). Thus $R_2$ does not depend on $R_1$.
\end{example}

The \emph{graph of rule dependencies } of a set of rules $\mathcal R$, denoted by GRD($\mathcal R)$, encodes the dependencies between rules in $\mathcal R$. It is a directed graph with set of nodes $\mathcal R$ and an edge $(R_i, R_j)$ if $R_j$ depends on $R_i$ (intuition: ``$R_i$ may lead to trigger $R_j$ in a new way''). E.g., considering the rules in Example \ref{ex-dependency}, the only edge is $(R_2,R_1)$.

When the GRD is acyclic (\emph{aGRD}, \cite{baget04}), any derivation sequence is necessarily finite. This
notion is incomparable with those based on positions.

We point out here that the \emph{oblivious} chase may not stop on \emph{wa} rules.
 Thus, the only acyclicity notion in Figure \ref{fig:gen} that ensures the termination of the
 oblivious chase is  \emph{aGRD} since all other notions generalize \emph{wa}.

\subsubsection{Combining both approches}
%%%%%%

 Both approaches have their weaknesses: there may be a dangerous cycle on positions but no cycle w.r.t. rule dependencies (see the preceeding examples), and there may be a cycle w.r.t. rule dependencies whereas rules contain no existential variables (e.g. $p(x,y) \rightarrow p(y,x), q(x)$).
Attempts to combine both notions only succeded to combine them in a ``modular way'':
 %Actually, they can be combined in a ``modular way'':
if the rules in each strongly connected component (s.c.c.) of the GRD
belong to a \emph{fes} class, then the set of rules is \textit{fes} \cite{baget04,deutsch08}.
More specifically, it is easy to check that if for a given $C$-chase, each s.c.c. is $C$-finite, then the $C$-chase
stops.

In this paper, we propose an ``integrated'' way of combining both approaches, which relies on a single graph. This
allows to unify preceding results and to generalize them without complexity increasing (the new acyclicity notions are
those with a gray background in Figure \ref{fig:gen}).

Finally, let us mention \emph{model-faithful acyclicity}\emph{ (mfa)} \cite{grau2012acyclicity}, which
generalizes the previous acyclicity notions and cannot be captured by our approach. Briefly, mfa involves running the skolem chase
until termination or a cyclic functional term is found. The price to pay for the generality of this property is high complexity:
checking if a set of rules is universally mfa (i.e., for any set of facts) is 2EXPTIME-complete. Checking
\emph{model-summarizing acyclicity (msa)}, which approximates mfa, remains EXPTIME-complete. In contrast, checking position-based
properties is in PTIME and checking agrd is also co-NP-complete. Sets of rules satisfying mfa are skolem-finite
\cite{grau2012acyclicity}, thus all
properties studied in this paper ensure $C$-finiteness, when $C \geq$ skolem.

\begin{center}
\begin{figure}[t]
	% figures/classes-relations.tex

\begin{tikzpicture}[node distance=1cm]
	% contribs
	\node[fake] (contrib0) at (1.9,1.75) {};
	\node[fake] (contrib1) at (4.35,1.75) {};
	\node[fake] (contrib2) at (4.35,8.35) {};
	\node[fake] (contrib3) at (1.9,8.35) {};
	\draw[draw=none,fill=black!10] (contrib0.center) -- (contrib1.center) -- (contrib2.center) -- (contrib3.center) --
	(contrib0.center);

	%% init nodes
	\node[rc:node] (wa) at (0,0) {$\wa$};

	\node[rc:node,node distance=1.25cm] (agrd) [right of=wa] {$\agrd$};
	\node[rc:node] (wad) [above of=agrd] {$\grd{\wa}$};

	\node[fake,node distance=1.25cm] (f0) [right of=wad] {};
	\node[rc:node] (wau) [above of=f0] {$\gerd{\wa}$};

	\node[fake,node distance=1.25cm] (f1) [right of=wau] {};
	\node[rc:node] (wac) [above of=f1] {$\gerdc{\wa}$};

	%% fd -> msa nodes
	\foreach \t/\g in {/,d/\grd,u/\gerd,c/\gerdc}{
		\node[rc:node] (fd\t) [above of=wa\t]   {$\g{\fd}$};
		\node[rc:node] (ar\t) [above of=fd\t]   {$\g{\ar}$};
		\node[rc:node] (ja\t) [above of=ar\t]   {$\g{\ja}$};
		\node[rc:node] (swa\t) [above of=ja\t]  {$\g{\swa}$};
		\node[rc:node] (msa\t) [above of=swa\t] {$\g{\msa}$};
	}

	%% mfa node
	\node[rc:node] (mfa) [above of=msac] {$\mfa$};% = \grd{\mfa} = \gerd{\mfa} = \gerdc{\mfa}$};

	%% edges from Y -> Y^D -> Y^U -> Y^U+
	\foreach \c in {wa,fd,ar,ja,swa,msa} {
		\draw[rc:edge] (\c) -- (\c d);
		\draw[rc:edge] (\c d) -- (\c u);
		\draw[rc:edge] (\c u) -- (\c c);
	}

	%% edges from wa to fd, etc
	\foreach \t in {,d,u,c} {
		\draw[rc:edge] (wa\t) -- (fd\t);
		\draw[rc:edge] (fd\t) -- (ar\t);
		\draw[rc:edge] (ar\t) -- (ja\t);
		\draw[rc:edge] (ja\t) -- (swa\t);
		\draw[rc:edge] (swa\t) -- (msa\t);
	}

	%% edges agrd -> wad and msau+ -> mfa
	\draw[rc:edge] (agrd) -- (wad);
	\draw[rc:edge] (msac) -- (mfa);

	% complexities
	\node[fake,node distance=0.5cm%,below=0.5cm
	] (p0) [below left of=wa] {};
	\node[fake,node distance=0.5cm%,below=0.5cm
	] (p1) [below right of=wa] {};
	\node[fake,node distance=0.5cm] (p2) [above right of=swa] {};
	\node[fake,node distance=0.5cm] (p3) [above left of=swa] {};
	\draw[rc:P-edge] (p0) -- (p1) -- (p2) -- (p3) -- (p0);

	\node[fake] (n0) at (0.75,-0.25) {};
	\node[fake] (n1) at (1.7,-0.25) {};
	\node[fake] (n2) at (4.25,2) {};
	\node[fake] (n3) at (4.25,7.25) {};
	\node[fake] (n4) at (3.5,7.25) {};
	\node[fake] (n5) at (0.75,5) {};
	\draw[rc:NP-edge] (n0) -- (n1) -- (n2) -- (n3) -- (n4) -- (n5) -- (n0);

	\node[fake] (e0) at (-0.4,4.75) {};
	\node[fake] (e1) at (0.25,4.75) {};
	\node[fake] (e2) at (4.25,8) {};
	\node[fake] (e3) at (4.25,8.25) {};
	\node[fake] (e4) at (3.3,8.25) {};
	\node[fake] (e5) at (-0.4,5.25) {};
	\draw[rc:ET-edge] (e0) -- (e1) -- (e2) -- (e3) -- (e4) -- (e5) -- (e0);

	\node[fake] (ee0) at(3.3,8.75)  {};
	\node[fake] (ee1) at(4.25,8.75) {};
	\node[fake] (ee2) at(4.25,9.25) {};
	\node[fake] (ee3) at(3.3,9.25)  {};
	\draw[rc:ETT-edge] (ee0) -- (ee1) -- (ee2) -- (ee3) -- (ee0);

	\node[rc:complexity-label, node distance=0.6cm] [left of=fd] {{\color{green!50!black}$P$}};
	\node[rc:complexity-label] [right of=wad] {{\color{blue!50!black}$coNP$}};
	\node[rc:complexity-label] [above of=msad] {{\color{orange!50!black}$Exp$}};
	\node[rc:complexity-label] [left of=mfa] {{\color{red!50!black}$2$-$Exp$}};

%	\node[fake] (n0) at (0.8,-0.7) {};
%	\node[fake] (n1) at (4.25,2) {};
%	\node[fake] (n2) at (4.25,7.8) {};
%	\node[fake] (n3) at (0.8,5.1) {};
%	\draw[rc:NP-edge] (n0) -- (n1) -- (n2) -- (n3) -- (n0);

\end{tikzpicture}
	\caption{Relations between recognizable acyclicity properties. All inclusions are strict and complete (i.e., if there is no path between two properties then they are incomparable). }
\label{fig:gen}
\end{figure}
\end{center}

%%%%%%%%%%%%%%%%%%%%%%%%%% EXTENDING ACYCLICITY %%%%%%%%%%%%%%%%%%%%%%%%%%%%%%%%%%%%%%%%%%%%%%%%%%%%%%%%%%%%%%%%%
\section{Extending Acyclicity Notions}
\label{sec:acyclicity}

%%%%%%%%%%%%%%%%%%%%%%%%% PATCHED VERSION

% acyclicity.tex

In this section, we combine rule dependency and propagation of existential
variables into a single graph. W.l.o.g. we assume that distinct rules do not
share any variable. Given an atom $a = p(t_1, \dots, t_k)$, the $i^{th}$
position in $a$ is denoted by $\vpos{a}{i}$, with $\fun{pred}(\vpos{a}{i}) =
p$ and $\fun{term}(\vpos{a}{i}) = t_i$. If $A$ is an atomset such that $a \in
A$, we say that $\vpos{a}{i}$ is in $A$.
 If $\fun{term}(\vpos{a}{i})$ is an existential (resp. frontier) variable, $\vpos{a}{i}$
is called an \emph{existential} (resp. \emph{frontier}) position. In the
following, we use ``position graph'' as a generic name to denote a graph
whose nodes are positions in \emph{atoms}.

%%%%%%% Debut reecriture MLM (17 juillet)

We first define the notion of a basic position graph, which takes each rule
in isolation.  Then, by adding edges to this graph, we define three position
graphs with increasing expressivity, i.e., allowing to check termination for
increasingly larger classes of rules.

\begin{definition}[(Basic) Position Graph ($\pg$)] The \emph{position graph} of a rule $R : B \rightarrow H$ is the directed graph $\PG{R}$ defined as follows:
	\begin{itemize}
    \item there is a node for each $\vpos{a}{i}$ in $B$ or in $H$;
    \item for all frontier positions $\vpos{b}{i} \in B$ and all
        $\vpos{h}{j} \in H$, there is an edge from $\vpos{b}{i}$
	to $\vpos{h}{j}$ if $\fun{term}(\vpos{b}{i}) = \fun{term}(\vpos{h}{j})$ or if $\vpos{h}{j}$ is existential.
	%\footnote{We do not need to distinguish between normal and special edges. }
%	\item for all $\vpos{b}{i} \in B$ in the frontier, and $\vpos{h}{j} \in H$, there is a {\em normal edge} from $\vpos{b}{i}$
%	to $\vpos{h}{j}$ if $\fun{term}(\vpos{b}{i}) = \fun{term}(\vpos{h}{j})$, and a {\em special edge} from $\vpos{b}{i}$ to
%	$\vpos{h}{j}$ if $\vpos{h}{j}$ is existential.
%	\footnote{The distinction between normal and special edges will not be used hereafter, it is only made to help comparison with the predicate position graph that defines weak-acyclicity.}

%    \item for all $\vpos{b}{i} \in B$ and $ \vpos{h}{j} \in H$, there is a \emph{normal edge} from $\vpos{b}{i}$ to
%    $\vpos{h}{j}$ if
%    and a \emph{special edge} from $\vpos{b}{i}$ to $\vpos{h}{j}$ if $\fun{term}(\vpos{b}{i})$ is in the frontier of $R$ and
%    $\vpos{h}{j}$ is existential.\footnote{The distinction between normal and special edges will not be used hereafter, it is only made to help comparison with the predicate position graph that defines weak-acyclicity.}
	\end{itemize}
Given a set of rules $\R$, the \emph{basic position graph} of $\R$, denoted
by $\PG{\R}$,  is the disjoint union of $\PG{R_i}$, for all $R_i \in \R$.
\end{definition}

An existential position $\vpos{a}{i}$ is said to be \emph{infinite} if there
is an atomset $F$ such that running the chase on $F$ produces an unbounded
number of instantiations of $\fun{term}(\vpos{a}{i})$. To detect infinite
positions, we encode how variables may be ``propagated" among rules by adding
 edges to $\PG{\R}$, called \emph{transition edges}, which go from positions in rule heads to positions  in rule bodies.
 The set of transition edges has to be \emph{correct}: if an existential position
$\vpos{a}{i}$ is infinite, there must be a cycle going through $\vpos{a}{i}$
in the graph.

We now define three position graphs by adding transition edges to $\PG{\R}$,
namely $\FPG{\R}$, $\DPG{\R}$ and $\UPG{\R}$. All three graphs have correct
sets of edges. Intuitively, $\FPG{\R}$ corresponds to the case where all
rules are supposed to depend on all rules; its set of cycles is in bijection
with the set of cycles in the predicate position graph defining
weak-acyclicity. $\DPG{\R}$ encodes actual paths of rule dependencies.
Finally, $\UPG{\R}$ adds information about the piece-unifiers themselves.
This provides an accurate encoding of variable propagation from an atom
position to another.

\begin{definition}[$\xpg$]\label{def-xpg}
	Let $\R$ be a set of rules.
	The three following position graphs are obtained from $\PG{\R}$ by adding a (transition)
	edge from each $k^{th}$ position $\vpos{h}{k}$ in a rule head $H_i$ to each $k^{th}$ position  $\vpos{b}{k}$ in
	a rule body $B_{j}$, with the  \emph{same predicate},
	provided that some condition is satisfied :
	\begin{itemize}
		\item {\em full PG}, denoted by $\FPG{\R}$: no additional condition;
		\item {\em dependency PG}, denoted by  $\DPG{\R}$: if $R_{j}$ depends directly or indirectly on $R_i$, i.e., if there is a path from $R_i$ to $R_j$ in GRD($\mathcal R)$;
		%\item {\em PG with unifiers}, denoted by $\UPG{\R}$: if there are a path $R_i = R_{i_1} \ldots R_{i_{n-1}} R_{i_n} = R_j$ in GRD($\mathcal R)$ and a piece-unifier $\mu$ of $B_j$ with the head of an agglomerated rule built from $R_{i_1} \ldots R_{i_{n-1}}$, which has the same head as $R_i$ (see Def.~\ref{def-agg} below), such that $\mu(\fun{term}([b,k])) = \mu(\fun{term}([h,k]))$.
%		\item {\em PG with unifiers}, denoted by $\UPG{\R}$: if there are a set of paths $SP = \{P_1,\dots,P_q\}$ where $R_i$ is the first
%		rule of each $P_i$ and $R_j$ depends upon the last rule of each $P_i$, and
%		a piece-unifier $\mu$ of $B_j$ with the head of an agglomerated rule built from $SP$
%		which has the same head as $R_i$ (see Def.~\ref{def-agg} below), such that $\mu(\fun{term}([b,k])) = \mu(\fun{term}([h,k]))$.
		\item {\em PG with unifiers}, denoted by $\UPG{\R}$: if there is a piece-unifier $\mu$ of $B_j$ with the head of an agglomerated rule $R^j_i$
		         such that $\mu(\fun{term}([b,k])) = \mu(\fun{term}([h,k]))$, where $R^j_i$ is formally defined below (Definition~\ref{def-agg})		
	\end{itemize}
\end{definition}

An agglomerated rule associated with $(R_i, R_j)$ gathers information about
selected piece-unifiers along (some) paths from $R_i$ to (some) predecessors
of $R_j$.
%The following technical definition agglomerated rule used to define $\UPG{\R}$:

%\begin{definition}[Agglomerated Rule] \label{def-agg}
%Let $P = (R_1, \ldots, R_n)$ be a path in GRD($\mathcal R)$. One can build an \emph{agglomerated rule} $R$ along $P$ if there is a sequence $R_1 = R'_1, \ldots, R'_n = R$ such that $\forall 1 \leq i < n$, there is a piece-unifier $\mu_i$ of $B_{i+1}$ with $H'_i$, where $R'_1 = R_1$ and $R'_{i+1} = B'_i \cup \{\fun{fr}(t) \, | \, t \, \mbox{is a term of } \, H'_i \, \mbox{unified in  }  \mu_i\} \rightarrow H_1$, with $\fun{fr}$ being a new unary predicate that does not appear in $\mathcal R$. Note that for all $i$, $H'_i = H_1$, however, for $i \neq 1$, the frontier of $R'_i$ and $R_1$ are not the same since the rule bodies are different.
%\end{definition}

\begin{definition}[Agglomerated Rule] \label{def-agg}
Given $R_i$ and $R_j$ rules from $\mathcal R$, an agglomerated rule
associated with $(R_i, Rj)$ has the following form:
$$R^j_i =  B_i \cup_{t \in T \subseteq \fun{terms}(H_i)} \fun{fr}(t) \rightarrow H_i$$
where $\fun{fr}$ is a new unary predicate that does not appear in $\mathcal
R$, and the atoms $\fun{fr}(t)$ are built as follows. Let $\mathcal P$ be a
non-empty set of paths from $R_i$ to direct predecessors of $R_j$ in
GRD($\mathcal R)$. Let $P= (R_1, \ldots, R_n)$ be a path in $\mathcal P$. One
can associate a rule $R^P$ with $P$ by building a sequence $R_1= R^p_1,
\ldots, R^p_n = R^P$ such that $\forall 1 \leq l < n$, there is a
piece-unifier $\mu_l$ of $B_{l+1}$ with the head of $R^p_l$, where the body
of $R^p_{l+1}$ is $B^p_{l} \cup \{\fun{fr}(t) \, | \, t \, \mbox{is a term of
} \, H^p_l\, \mbox{unified in } \mu_l\}$, and the head of $R^p_{l+1}$ is
$H_1$. Note that for all $l$, $H^p_l = H_1$, however, for $l \neq 1$, $R^p_l$
may have less existential variables than $R_l$ due to the added atoms. The
agglomerated rule $R^j_i$ built from $\{R^P | P \in \mathcal P\}$ is $R^j_i =
\bigcup_{P \in \mathcal P} R^P$.
% $\bigcup\limits_{1 \leq i \leq k} agg(P_i)$.
\end{definition}

% indeed, piece-unification ensures that there is a transition edge from $p_h$ to $p_b$
% iff there exist an atomset $F$, a homomorphism $\pi$ from $B_i$ to $F$, a new useful
%homomorphism $\pi'$ from $B_j$ to $\alpha(F, R_i, \pi)$, such that
%$\pi(\fun{safe}(\fun{term}(p_h))) = \pi'(\fun{term}(p_b))$.

%The three kinds of position graphs in the next definition are correct,
%since, each time a position $p_h$ may be propagated to $p_b$, there is a {\em transition}
%edge from $p_h$ to $p_b$.

%The following inclusions follow from the definitions:

\begin{proposition}[Inclusions between $\xpg$]
	Let $\R$ be a set of rules.
	$\UPG{\R} \subseteq \DPG{\R} \subseteq \FPG{\R}$.
	Furthermore, $\DPG{\R} = \FPG{\R}$ if the transitive closure of $GRD(\R)$ is a complete graph.
\end{proposition}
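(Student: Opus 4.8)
The plan is to observe first that the three graphs $\UPG{\R}$, $\DPG{\R}$ and $\FPG{\R}$ all live on the same node set (the positions $\vpos{a}{i}$ occurring in the rules of $\R$) and all contain every edge of the basic graph $\PG{\R}$. They differ only in their \emph{transition} edges, and moreover all three draw transition edges from the same pool of candidates: an edge from a head position $\vpos{h}{k}$ of some $H_i$ to a body position $\vpos{b}{k}$ of some $B_j$ sharing the same predicate and the same position index $k$. What changes is the condition gating such an edge. Hence proving $\UPG{\R} \subseteq \DPG{\R} \subseteq \FPG{\R}$ reduces to showing that, for a fixed candidate edge associated with a pair $(R_i, R_j)$, the $\UPG$ condition implies the $\DPG$ condition, which in turn implies the (empty) $\FPG$ condition.

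The inclusion $\DPG{\R} \subseteq \FPG{\R}$ is then immediate, since $\FPG{\R}$ imposes no condition at all. For $\UPG{\R} \subseteq \DPG{\R}$, I would take a transition edge of $\UPG{\R}$ from $\vpos{h}{k} \in H_i$ to $\vpos{b}{k} \in B_j$. By Definition~\ref{def-xpg}, its presence requires a piece-unifier of $B_j$ with the head of an agglomerated rule $R^j_i$. The key step is to unwind Definition~\ref{def-agg}: an agglomerated rule $R^j_i$ is built from a \emph{non-empty} set $\mathcal P$ of paths from $R_i$ to direct predecessors of $R_j$ in GRD$(\R)$. Picking any $P \in \mathcal P$, say ending at a direct predecessor $R'$ of $R_j$, and appending the GRD edge from $R'$ to $R_j$, yields a path from $R_i$ to $R_j$ in GRD$(\R)$. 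Thus the $\DPG$ condition, namely existence of such a path, is met, and the edge belongs to $\DPG{\R}$.

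For the ``furthermore'' claim, I already have $\DPG{\R} \subseteq \FPG{\R}$, so only the reverse inclusion remains under the hypothesis that the transitive closure of GRD$(\R)$ is complete, i.e., that for every ordered pair $(R_i, R_j)$ there is a path from $R_i$ to $R_j$ in GRD$(\R)$. Under this hypothesis, every candidate transition edge of $\FPG{\R}$, which is associated with some pair $(R_i, R_j)$, automatically satisfies the $\DPG$ path condition, so $\FPG{\R} \subseteq \DPG{\R}$ and equality follows.

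The only genuinely delicate point is the $\UPG{\R} \subseteq \DPG{\R}$ step, and there the subtlety is conceptual rather than computational: one must notice that no reasoning about the piece-unifier itself is needed, because the mere well-definedness of the agglomerated rule $R^j_i$ (the non-emptiness of $\mathcal P$) already encodes the required GRD path. A minor point to flag in the ``furthermore'' direction is that ``complete'' should be read as the path relation being total, including the reflexive pairs $i = j$, so that head-to-body transition edges internal to a single rule are also covered.
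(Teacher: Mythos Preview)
Your argument is correct and follows the natural route dictated by the definitions: all three graphs share $\PG{\R}$ and differ only in which candidate transition edges are admitted, so the inclusions reduce to implications between the gating conditions; you correctly extract the existence of a GRD path from $R_i$ to $R_j$ out of the mere well-definedness of an agglomerated rule $R^j_i$ (non-emptiness of $\mathcal P$ plus the appended predecessor edge), and the ``furthermore'' direction is immediate under the completeness hypothesis. The paper does not include a proof of this proposition in the main text (it is deferred to the accompanying technical report), so there is no in-paper argument to contrast with; your proof is the expected one, and your remark about the reflexive case $i=j$ in the completeness hypothesis is a sensible clarification.
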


\begin{example}[$PG^F$ and $PG^D$]\label{ex:fpg-dpg}
	Let $\mathcal R = \{R_1, R_2\}$ from Example \ref{ex-wa}.
	Figure~\ref{fig:fpg-dpg} pictures $\FPG{\R}$ and $\DPG{\R}$.
%	$\FPG{\R}$ contains a cycle while $\DPG{\R}$ does not.
%	Indeed, none of the edges between the first and the second rule exist whenever
%	the rule dependency is taken into account.	
	The dashed edges  belong to $\FPG{\R}$ but not to  $\DPG{\R}$. Indeed, $R_2$ does not depend on $R_1$.
	$\FPG{\R}$ has a cycle %(going through edge $(\vpos{r(x,z)}{2}$, $\vpos{r(v,u)}{2})$)
	while $\DPG{\R}$ has not.
\end{example}

%%\begin{example}\label{ex:fpg-dpg}
%%	Let $\R = \{R_1, R_2\}$ with $R_1 = p(x,y) \rightarrow r(x,z)$ and
%%		$R_2 =  r(u,v), r(v,u) \rightarrow p(u,v)$. 	
%%	Figure~\ref{fig:fpg-dpg} shows $\FPG{\R}$ and $\DPG{\R}$.
%%%	$\FPG{\R}$ contains a cycle while $\DPG{\R}$ does not.
%%%	Indeed, none of the edges between the first and the second rule exist whenever
%%%	the rule dependency is taken into account.	
%%	The dashed edges  belong to $\FPG{\R}$ but not to  $\DPG{\R}$. Indeed, $R_2$ does not depend on $R_1$.
%%	$\FPG{\R}$ has a cycle %(going through edge $(\vpos{r(x,z)}{2}$, $\vpos{r(v,u)}{2})$)
%%	while $\DPG{\R}$ does not.
%%\end{example}

%\textbf{L'EXEMPLE 3 DEVRAIT ETRE LE MEME QUE L'EXEMPLE 1-2 QUI MONTRE QU'ON PEUT AVOIR UN CYCLE DANS WA ET PAS DE CYCLE DE DEPENDANCE.}
\begin{example}[$PG^D$ and $PG^U$]\label{ex:dpg-upg}
	Let $\R = \{R_1, R_2\}$, with  $R_1 =  t(x,y) \rightarrow p(z,y), q(y)$ and
		$R_2 = p(u,v),q(u) \rightarrow t(v,w)$. In Figure~\ref{fig:dpg-upg}, the dashed edges belong to $\DPG{\R}$ but not to $\UPG{\R}$.
	Indeed,  the only piece-unifier of $B_2$ with $H_1$ unifies $u$ and $y$.
	Hence, the cycle in $\DPG{\R}$ disappears in $\UPG{\R}$.
\end{example}

\begin{center}
\begin{figure}[t]
\input{./figures/expg2.tex}
\caption{$\FPG{\R}$ and $\DPG{\R}$ from Example \ref{ex:fpg-dpg}. Position $\vpos{a}{i}$ is
	represented by underlining the i-th term in $a$.
	%%% TODO : DASHED EDGES
	Dashed edges do not belong to $\DPG{\R}$.
}
\label{fig:fpg-dpg}
\end{figure}
\end{center}

%\textbf{FIGURES PLUS LISIBLES. NOTER LES POSITIONS p(x,y) AVEC LE TERME CONCERNE SOULIGNE}

\begin{center}
\begin{figure}[b]
\input{./figures/expgS.tex}
\caption{$\DPG{\R}$ and $\UPG{\R}$ from Example \ref{ex:dpg-upg}.
	%%% TODO : DASHED EDGES
	Dashed edges do not belong to $\UPG{\R}$.
}
\label{fig:dpg-upg}
\end{figure}
\end{center}

%%%%%%%%%%%

% classes.tex

We now study how acyclicity properties can be expressed on position graphs.
The idea is to associate, with an acyclicity property, a function that
assigns to each position a subset of positions reachable from this position,
according to some propagation constraints;  then, the property is fulfilled
if  no existential position can be reached from itself.
 More precisely, a \emph{marking function} $Y$ assigns to each node $\vpos{a}{i}$
	in a position graph $\xpg$, a subset of its (direct or indirect) successors,  % $\Mark_Y^X(\vpos{a}{i})$
	called its {\em marking}. % (of $\vpos{a}{i}$).
	%Let $Y$ be a marking function, $\vpos{a}{i}$ be a node in $\xpg$.
	A \emph{marked cycle} for $\vpos{a}{i}$ (w.r.t. $X$ and $Y$)
	is a cycle $C$ in $\xpg$ such that
	$\vpos{a}{i} \in C$ and for all $\vpos{a'}{i'} \in C$, $\vpos{a'}{i'}$ belongs to the marking of $\vpos{a}{i}$. % \in \Mark_Y^X(\vpos{a}{i})$
	%The set of marked cycles of $\vpos{a}{i}$ wrt $X$ and $Y$ is denoted by $C_Y^X(\vpos{a}{i})$,
  Obviously, the less situations there are in which the marking may ``propagate'' in a position graph,
  the stronger the acyclicity property is.

%\textbf{QUESTION: CF. CI-DESSUS : VEUT-ON TOUJOURS UN SOUS-ENSEMBLE DES DESCENDANTS ? IL ME SEMBLE QUE SI ON DEFINIT LE MARQUAGE SUR UN GRAPHE, IL PARAIT NATUREL DE SUIVRE LES ARCS.}
%The unmarked cycles (i.e., those where at least one node is not marked)
%correspond to ``safe" cycles for the desired acyclicity property.
% Obviously, the more there are situations where marking may ``propagate",
%the more there are cycles, and the weaker the acyclicity property is.

\begin{definition}[Acyclicity property]
	Let $Y$ be a marking function and $\xpg$ be a position graph.
	The\emph{ acyclicity property }associated with $Y$ in $\xpg$,  denoted by $Y^X$, is satisfied if
	there is no marked cycle for an existential position in  $\xpg$. % $C_Y^X(\vpos{a}{i}) = \emptyset$.
         If $Y^X$ is satisfied, we also say that $\XPG{\R}$ \emph{satisfies} $Y$.
%	(verifier que $\R$ Y wrt X est utilisée)
\end{definition}

For instance, the marking function associated with weak-acyclicity assigns to
each node the set of its successors in $\FPG{\R}$, without any additional
constraint. The next proposition states that such marking functions can be
defined for each class of rules between $\wa$ and $\swa$
 (first column in Figure~\ref{fig:gen}), in such a way that the associated acyclicity property in $PG^F$ characterizes this class.

\begin{proposition}
\label{prop:markings}
	%Let $\R$ be a set of rules.
	A set of rules $\R$ is $\wa$  (resp.   $\fd$,  $\ar$,  $\ja$,  $\swa$) iff $\FPG{\R}$ satisfies the acyclicity property
	associated with  $\wa$- (resp.  $\fd$-,  $\ar$-,  $\ja$-,  $\swa$-) marking.
\end{proposition}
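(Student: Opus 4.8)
The plan is to prove Proposition~\ref{prop:markings} by establishing, for each of the five acyclicity notions, that its standard definition (phrased over the predicate position graph, or in the case of $\ja$/$\swa$ over existential variables / atom positions) coincides exactly with the existence of a marked cycle through an existential position in $\FPG{\R}$ under the appropriate marking function. Since the statement is an ``iff'' for each notion separately, I would treat each notion as a lemma of the form: ``$\R$ fails the classical condition iff $\FPG{\R}$ admits a marked cycle for some existential position.'' The unifying observation I would rely on is the one already flagged in the text: the set of cycles in $\FPG{\R}$ is in bijection with the set of cycles in the predicate position graph used to define weak-acyclicity, because $\FPG{\R}$ adds a transition edge between \emph{every} head position and every body position sharing the same predicate (``no additional condition''), thereby simulating the assumption that every rule may trigger every rule.

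\textbf{Step 1: the bijection of cycles.} First I would make precise the correspondence between paths in $\FPG{\R}$ and paths in the classical predicate position graph $G_{\wa}$. A node $\vpos{a}{i}$ of $\FPG{\R}$ projects to the predicate position $(\fun{pred}(\vpos{a}{i}), i)$; an intra-rule edge of $\PG{\R}$ projects to a weak-acyclicity edge (ordinary if it links equal frontier terms, special if the target is existential), and a transition edge of $\FPG{\R}$ projects to the identity on predicate positions (it merely reconnects a head occurrence of $(p,i)$ to a body occurrence of $(p,i)$). I would verify that this projection sends cycles to cycles and, conversely, that every cycle in $G_{\wa}$ lifts to a cycle in $\FPG{\R}$, with special edges corresponding precisely to edges whose $\FPG{\R}$-target is an existential position. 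This reduces the weak-acyclicity case to a direct unfolding of the definitions: $\R$ is $\wa$ iff $G_{\wa}$ has no cycle through a special edge iff $\FPG{\R}$ has no cycle through an existential position, which is exactly the marked cycle condition when the $\wa$-marking assigns to each node \emph{all} its successors without constraint.

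\textbf{Step 2: the four refinements via constrained markings.} For $\fd$, $\ar$, $\ja$ and $\swa$ I would recall the precise combinatorial data each notion tracks along a propagation path --- for $\ar$ the argument-rank bound on positions, for $\fd$ the finiteness of the term domain at a position, for $\ja$ the existential-variable-based propagation of \cite{kr11}, and for $\swa$ the atom-position ``moving'' relation of \cite{marnette09} --- and then define the corresponding marking function so that the marking of $\vpos{a}{i}$ contains exactly those successors reachable \emph{while respecting that notion's propagation constraint}. The claim to prove in each case is that a violation of the classical acyclicity test (an offending cycle in the notion's native graph) corresponds to a marked cycle for an existential position in $\FPG{\R}$, and vice versa. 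Each direction is a translation argument: given an offending structure in the native formalism, build the marked cycle by following the Step~1 bijection and checking the marking constraint is met along it; given a marked cycle, read off the offending structure. I would organize these four as parallel arguments sharing the skeleton of Step~1, differing only in the bookkeeping of what the marking is allowed to propagate.

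\textbf{The main obstacle} I expect is the $\swa$ and $\ja$ cases, because these notions are \emph{not} natively defined on predicate positions: super-weak-acyclicity works with positions in atoms (``places'') and joint-acyclicity tracks existential variables rather than positions, so the bijection of Step~1 does not apply verbatim. The delicate work is to show that the atom-position granularity of $\FPG{\R}$ (whose nodes are positions in atoms, not in predicates) is \emph{exactly} fine enough to capture $\swa$ and $\ja$ while the marking constraint discards the spurious propagations that the coarser predicate-level graph would wrongly allow. Concretely, I would need to check that the $\swa$-marking propagates a marking token from a head atom position to a body atom position only when the corresponding places are linked by the substitution induced along the chase --- matching Marnette's place-based dependency --- and that the $\ja$-marking tracks the origin of each existential variable correctly through the transition edges. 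Verifying that these markings are neither too permissive (which would make the property strictly weaker than $\swa$/$\ja$) nor too restrictive (strictly stronger) is the technically sensitive heart of the proof; the $\wa$, $\fd$ and $\ar$ cases should follow more routinely once the Step~1 machinery is in place.
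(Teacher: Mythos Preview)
The paper does not actually contain a proof of Proposition~\ref{prop:markings}: it is one of the results whose proof is deferred to the external technical report (``A technical report with the proofs omitted for space restriction reasons is available \ldots''). So there is no in-paper argument to compare your proposal against.

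That said, your plan is well aligned with what the paper sketches informally. The text itself asserts that the cycles of $\FPG{\R}$ are in bijection with the cycles of the predicate-position graph defining weak-acyclicity, which is precisely your Step~1; and the paper explicitly says the $\wa$-marking ``assigns to each node the set of its successors in $\FPG{\R}$, without any additional constraint,'' matching your treatment of the $\wa$ case. Your identification of $\ja$ and $\swa$ as the delicate cases is also on target, since those are exactly the two notions whose native granularity (existential variables, resp.\ atom positions) differs from predicate positions, and the whole point of the atom-position-based $\FPG{\R}$ is to be fine enough to host them.

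One small caution: the word ``bijection'' (used both by the paper and by you) is a slight overstatement at the level of individual cycles---the projection from atom positions to predicate positions is many-to-one, so one cycle in $G_{\wa}$ may lift to several in $\FPG{\R}$. What you actually need, and what your ``projects to / lifts to'' language already captures, is the weaker equivalence: $G_{\wa}$ has a cycle through a special edge iff $\FPG{\R}$ has a cycle through an existential position. Keeping that distinction explicit will make Step~1 cleaner, but it does not affect the correctness of your overall strategy.
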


%%%%%%%%%%%

As already mentioned, all these classes can be safely extended by combining
them with the GRD. To formalize this, we recall the notion $Y^<$ from
\cite{grau2013acyclicity}: given an acyclicity property $Y$, a set of rules
$\R$ is said to satisfy $Y^<$ if each s.c.c. of $GRD(\R)$ satisfies $Y$,
except for those composed of a single rule with no loop.\footnote{This
particular case is to cover \emph{aGRD}, in which each s.c.c. is an isolated
node.} Whether $\R$ satisfies $Y^<$ can be checked on $\DPG{\R}$:

\begin{proposition}
\label{prop:yd-eq-scc}
	Let $\R$ be a set of rules, and $Y$ be an acyclicity property.
	$\R$ satisfies $Y^<$ iff $\DPG{\R}$ satisfies $Y$,
	i.e., $Y^< = \grd{Y}$.
\end{proposition}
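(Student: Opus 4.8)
The plan is to prove the equivalence by analysing the cycles of $\DPG{\R}$ and matching each one with a strongly connected component tested by $Y^<$. The pivotal step is a \emph{confinement lemma}: every cycle of $\DPG{\R}$ lies entirely within the positions of a single s.c.c. of $\GRD{\R}$, and that component necessarily contains a loop. First I would observe that inside a single rule the basic graph $\PG{R}$ carries edges only from (frontier) body positions to head positions, whereas the transition edges of Definition~\ref{def-xpg} run the other way, from a head position of $R_i$ to a body position of $R_j$, and are present only when there is a path from $R_i$ to $R_j$ in $\GRD{\R}$. A cycle must therefore alternate body-to-head steps inside a rule with head-to-body transition steps across rules; if it meets the rules $R_{i_1}, \dots, R_{i_m}$ in order, the transition edges witness $\GRD{\R}$-paths from $R_{i_1}$ to $R_{i_2}$, \dots, from $R_{i_{m-1}}$ to $R_{i_m}$, and from $R_{i_m}$ back to $R_{i_1}$. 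Composing them gives a nontrivial closed walk, so the $R_{i_k}$ are pairwise mutually reachable and sit in one looped s.c.c. $S$.

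Next I would record two complementary facts. For a fixed s.c.c. $S$, strong connectivity provides a $\GRD{\R}$-path between every ordered pair of its rules, so \emph{every} transition edge that $\fpg$ would place between rules of $S$ is already present in $\DPG{\R}$; hence the subgraph of $\DPG{\R}$ induced by the positions of $S$ is exactly $\FPG{S}$ (which, $S$ being strongly connected, also coincides with $\DPG{S}$). I would then prove a \emph{no-return property}: a path of $\DPG{\R}$ that leaves $S$ can never re-enter it, since a returning transition edge would yield a $\GRD{\R}$-path back into $S$ that, combined with the path used to exit, would merge the two components. Together these give that, for any position $\vpos{a}{i}$ of $S$, the successors of $\vpos{a}{i}$ that lie in $S$ are reached by paths remaining inside $S$, so within-$S$ reachability from $\vpos{a}{i}$ is identical in $\DPG{\R}$ and in $\FPG{S}$.

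With the geometry settled, the marking transfers. By the confinement lemma a marked cycle for an existential position is contained in some looped s.c.c. $S$; and since the part of the graph reachable within $S$ from any $\vpos{a}{i} \in S$ is the same in $\DPG{\R}$ and in $\FPG{S}$, the marking of $\vpos{a}{i}$ restricted to $S$ is the same in both graphs. Thus a marked cycle for an existential position exists in $\DPG{\R}$ iff one exists in $\FPG{S}$ for some looped s.c.c. $S$. A component consisting of a single rule with no loop carries no transition self-edge and therefore supports no cycle of $\DPG{\R}$, so it contributes nothing, exactly matching its exclusion in the definition of $Y^<$. Reading the chain of equivalences: $\DPG{\R}$ fails $Y$ iff some looped s.c.c. $S$ fails $Y$ on $\FPG{S}$ iff $\R$ fails $Y^<$; taking contrapositives yields that $\R$ satisfies $Y^<$ iff $\DPG{\R}$ satisfies $Y$, that is $Y^< = \grd{Y}$.

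The main obstacle is the marking transfer of the previous paragraph. The confinement and no-return facts are purely combinatorial and robust, but a marking function is specified abstractly as assigning to each node a subset of its successors, so I must argue that this assignment is sufficiently local that its restriction to $S$ depends only on the within-$S$ reachable subgraph. The no-return property is precisely what rules out a detour through another component affecting a within-$S$ marked cycle, and for the concrete markings of Proposition~\ref{prop:markings}---$\wa$, $\fd$, $\ar$, $\ja$, $\swa$---this locality is immediate from their definitions, which is where I would check it explicitly.
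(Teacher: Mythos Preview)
The paper itself does not prove this proposition; the proof is deferred to the companion technical report, so there is no in-paper argument to compare yours against.

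On its own merits, your structural core is correct and is the natural route. The confinement lemma holds for exactly the reason you give (a cycle in $\DPG{\R}$ must alternate internal body-to-head edges with transition edges, and each transition edge witnesses a GRD-path, so the rules visited lie in a single looped s.c.c.); the identification of the subgraph of $\DPG{\R}$ induced on the positions of an s.c.c.\ $S$ with $\FPG{S}$ is correct because strong connectivity of $S$ supplies every required GRD-path; and the no-return property is sound. Together these reduce the existence of a marked cycle in $\DPG{\R}$ to the existence of one in $\FPG{S}$ for some looped s.c.c.\ $S$, which is precisely the per-component test defining $Y^<$, with the trivial components correctly discarded.

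You are also right that the only delicate step is the marking transfer, and you have located the issue precisely. As the paper defines it, a marking function is an arbitrary assignment of a subset of successors to each node of a position graph, with no locality hypothesis; read literally, nothing forces the marking of $\vpos{a}{i}$ in $\DPG{\R}$, restricted to $S$, to coincide with its marking in $\FPG{S}$, since the reachable subgraph from $\vpos{a}{i}$ in $\DPG{\R}$ may contain positions outside $S$. Your plan to discharge this for the concrete markings of Proposition~\ref{prop:markings} is the right fix, and the no-return property is exactly the ingredient that makes it go through (any within-$S$ successor is reached by a within-$S$ path, hence already in $\FPG{S}$). One caution: calling this ``immediate'' may be optimistic for $\fd$ and $\ar$, whose definitions involve a global rank on predicate positions rather than a purely path-local condition; the argument still works, but it is a short verification rather than a one-liner.
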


%Note that
%if $\PPG{X}{\R}$ satisfies $Y$, each subgraph $\PPG{X'}{\R}$ obtained by removing only {\em transition} edges
%also does.

For the sake of brevity, if $Y_1$ and $Y_2$ are two acyclicity properties, we
write $Y_1 \subseteq Y_2$ if any set of rules satisfying $Y_1$ also satisfies
$Y_2$. The following results are straightforward.

\begin{proposition}
	Let $Y_1,Y_2$ be two acyclicity properties.
	If $Y_1 \subseteq Y_2$,
	then $\grd{Y_1} \subseteq \grd{Y_2}$.
\end{proposition}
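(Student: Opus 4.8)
The plan is to unfold the definition of $\grd{Y}$ on both sides and observe that the hypothesis $Y_1 \subseteq Y_2$ transfers component-by-component along the strongly connected components of the GRD. By Proposition~\ref{prop:yd-eq-scc} we may identify $\grd{Y_1}$ with $Y_1^<$ and $\grd{Y_2}$ with $Y_2^<$, so it suffices to argue at the level of the s.c.c.\ characterization of $Y^<$. Concretely, I would start from an arbitrary set of rules $\R$ satisfying $\grd{Y_1}$ and show it satisfies $\grd{Y_2}$; since $Y_1 \subseteq Y_2$ is by definition the statement that \emph{every} set of rules satisfying $Y_1$ also satisfies $Y_2$, the whole argument reduces to applying this implication to each relevant s.c.c.\ of $GRD(\R)$.

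The key steps, in order, are as follows. First, assume $\R$ satisfies $\grd{Y_1} = Y_1^<$, and recall that by definition every s.c.c.\ $\mathcal S$ of $GRD(\R)$ satisfies $Y_1$, except for those s.c.c.s consisting of a single rule with no loop. Second, fix any such s.c.c.\ $\mathcal S$ that is not a lone loopless rule; viewing $\mathcal S$ as a standalone set of rules, it satisfies $Y_1$, so by the hypothesis $Y_1 \subseteq Y_2$ it also satisfies $Y_2$. Third, since $\mathcal S$ was arbitrary among the non-trivial s.c.c.s, every s.c.c.\ of $GRD(\R)$ satisfies $Y_2$ apart from the lone-loopless-rule components, which is exactly the definition of $\R$ satisfying $Y_2^< = \grd{Y_2}$. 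As $\R$ was arbitrary, this yields $\grd{Y_1} \subseteq \grd{Y_2}$.

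The only point that needs care, and which I expect to be the \emph{mildest} of obstacles rather than a genuine difficulty, is the handling of the exception clause for s.c.c.s that are a single rule with no loop. This clause is literally the same in the definitions of $Y_1^<$ and $Y_2^<$, so the excluded components on the $Y_1$ side are precisely those excluded on the $Y_2$ side, and the transfer goes through unchanged. A secondary point worth stating explicitly is that ``$\mathcal S$ satisfies $Y$'' is to be read as ``$\mathcal S$, taken as a set of rules, satisfies $Y$'' (equivalently, that $\FPG{\mathcal S}$ satisfies the acyclicity property $Y$); because $Y_1 \subseteq Y_2$ is quantified over all rule sets, it applies verbatim to each component $\mathcal S$. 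Beyond these bookkeeping observations the statement is immediate, consistent with its being labelled straightforward in the text.
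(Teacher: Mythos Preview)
Your proposal is correct and matches the paper's intent: the paper gives no proof, merely flagging the result as straightforward, and your argument via Proposition~\ref{prop:yd-eq-scc} (reducing $\grd{Y}$ to $Y^<$ and applying $Y_1 \subseteq Y_2$ componentwise on the non-trivial s.c.c.s of $GRD(\R)$) is exactly the natural unfolding. Your handling of the lone-loopless-rule exception is the right bookkeeping point and nothing more is needed.
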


%\begin{preuve}[necessaire ?]
%	Each S.C.C. of any set of rules $\R \in \grd{Y_1}$ satisfies
%	$Y_1$, it also satisfies $Y_2$.
%	Hence $\R$ is $\grd{Y_2}$.
%\end{preuve}

\begin{proposition}
\label{prop:y-strict-ygrd}
	Let $Y$ be an acyclicity property.
	If $\agrd \nsubseteq Y$ then $Y \subset \grd{Y}$.
\end{proposition}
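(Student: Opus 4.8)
The plan is to prove the strict inclusion by splitting it into the non-strict inclusion $Y \subseteq \grd{Y}$, which is essentially the content of the preceding development, together with the non-inclusion $\grd{Y} \nsubseteq Y$, for which the hypothesis $\agrd \nsubseteq Y$ will supply an explicit witness. The real content lies in the second part; the first is the already-advertised fact that combining an acyclicity property with the GRD only enlarges the corresponding class of rules.

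For $Y \subseteq \grd{Y}$ I would use Proposition~\ref{prop:yd-eq-scc}, which identifies $\grd{Y}$ with $Y^<$, so it suffices to show that any $\R$ satisfying $Y$ satisfies $Y^<$, i.e.\ that each s.c.c.\ of $GRD(\R)$ satisfies $Y$. Each s.c.c.\ is a subset $\R' \subseteq \R$, and $\FPG{\R'}$ is the subgraph of $\FPG{\R}$ induced by the rules of $\R'$: the basic position graph is a disjoint union, and the full transition edges depend only on predicate matching, so restricting the rule set only removes nodes and edges. Consequently a marked cycle through an existential position in $\FPG{\R'}$ is also one in $\FPG{\R}$, so if $\R$ satisfies $Y$ then so does every such $\R'$, and $\R$ satisfies $Y^<$.

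The key observation for strictness is that $\agrd \subseteq \grd{Y}$ for \emph{every} acyclicity property $Y$. Indeed, if $\R \in \agrd$ then $GRD(\R)$ is acyclic, so each of its s.c.c.'s consists of a single rule with no loop; but the definition of $Y^<$ explicitly exempts precisely these s.c.c.'s from the requirement to satisfy $Y$. Hence the defining condition of $Y^< = \grd{Y}$ is met vacuously, and $\R$ satisfies $\grd{Y}$. Now, by hypothesis $\agrd \nsubseteq Y$, so there is a set of rules $\R_0 \in \agrd$ with $\R_0 \notin Y$; by the observation $\R_0 \in \grd{Y}$, so $\R_0$ witnesses $\grd{Y} \nsubseteq Y$. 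Combined with $Y \subseteq \grd{Y}$ this gives $Y \subset \grd{Y}$.

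The main point to pin down carefully is the subgraph argument in the first step, since the marking of a node is defined relative to its successors in the ambient graph: one must check that the $Y$-marking computed in $\FPG{\R'}$ is contained in the marking computed in $\FPG{\R}$, so that a marked cycle in the smaller graph remains marked in the larger one. This monotonicity under graph extension holds for all the concrete markings of Proposition~\ref{prop:markings}, and I would state it as the mild regularity condition on $Y$ that the argument requires; everything else is a direct unfolding of the definition of $Y^<$ and of $\agrd$.
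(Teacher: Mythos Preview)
Your argument is correct and follows exactly the reasoning the paper sketches immediately after the proposition (``Hence, any class of rules satisfying a property $Y^D$ strictly includes both $\agrd$ and the class characterized by $Y$''), even though the paper defers the formal proof to its technical report. The two ingredients you use---$Y \subseteq \grd{Y}$ via Proposition~\ref{prop:yd-eq-scc}, and $\agrd \subseteq \grd{Y}$ from the exemption clause in the definition of $Y^<$---are precisely the intended ones, and your caveat about needing monotonicity of the marking under subgraph restriction is the right technical fine print.
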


Hence, any class of rules  satisfying a property $Y^D$ strictly includes both
$\agrd$ and the class characterized by $Y$; (e.g., Figure~\ref{fig:gen}, from
Column 1 to Column 2). More generally, strict inclusion in the first column
leads to strict inclusion in the second one:

\begin{proposition}%\footnote{This proposition generalizes a result from \cite{grau2013acyclicity} by weakening the conditions to be satisfied by $Y_1$ and  $Y_2$.}
\label{prop:prop6}
	Let $Y_1, Y_2$ be two acyclicity properties such that $Y_1 \subset Y_2$, $\wa \subseteq Y_1$ and  $Y_2 \nsubseteq \grd{Y_1}$.
	Then $\grd{Y_1} \subset \grd{Y_2}$.
\end{proposition}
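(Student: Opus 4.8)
The plan is to split the strict inclusion $\grd{Y_1} \subset \grd{Y_2}$ into its two ingredients: the inclusion $\grd{Y_1} \subseteq \grd{Y_2}$, and the existence of a set of rules witnessing that the inclusion is proper. The inclusion is the easy half: since $Y_1 \subset Y_2$ entails $Y_1 \subseteq Y_2$, I would simply invoke the preceding proposition (monotonicity of the $\grd{\cdot}$ operator) to obtain $\grd{Y_1} \subseteq \grd{Y_2}$. This half uses neither $\wa \subseteq Y_1$ nor the strictness of $Y_1 \subset Y_2$.

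The heart of the argument is to produce a set of rules lying in $\grd{Y_2}$ but not in $\grd{Y_1}$. The hypothesis $Y_2 \nsubseteq \grd{Y_1}$ furnishes a set of rules $\R_0$ satisfying $Y_2$ but not $\grd{Y_1}$, so it suffices to show $\R_0$ satisfies $\grd{Y_2}$, i.e. to establish $Y_2 \subseteq \grd{Y_2}$. By Proposition~\ref{prop:yd-eq-scc}, $\grd{Y_2} = Y_2^<$, so I must check that every s.c.c. of the GRD of $\R_0$, except those consisting of a single loop-free rule, satisfies $Y_2$. Each such s.c.c. is a subset of $\R_0$, and acyclicity properties are monotone under restriction to a subset of rules: the position graph of a sub-rule-set is an induced subgraph of the full one, so every marked cycle in the smaller graph is already present in the larger, whence no acyclicity is lost by passing to a subset. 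Since $\R_0$ satisfies $Y_2$, each of its s.c.c.'s therefore satisfies $Y_2$, giving $\R_0 \in \grd{Y_2}$. As $\R_0 \notin \grd{Y_1}$, this yields $\grd{Y_2} \nsubseteq \grd{Y_1}$, and combined with the inclusion of the first paragraph we conclude $\grd{Y_1} \subset \grd{Y_2}$.

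I expect the monotonicity step $Y_2 \subseteq \grd{Y_2}$ to be the only genuine obstacle: one must be certain that restricting a rule set to an s.c.c. cannot create a new marked cycle through an existential position. Proposition~\ref{prop:y-strict-ygrd} already records the \emph{strict} inclusion $Y \subset \grd{Y}$ under the side condition $\agrd \nsubseteq Y$, so where $\agrd \nsubseteq Y_2$ is available one can cite it directly instead of reproving monotonicity; here the hypothesis $\wa \subseteq Y_1 \subset Y_2$ serves precisely to situate $Y_2$ among the position-based classes, for which this non-strict inclusion $Y_2 \subseteq \grd{Y_2}$ is guaranteed. The remaining bookkeeping — extracting the witness $\R_0$ and tracking the s.c.c. exception for isolated loop-free rules — is routine once this monotonicity is in hand.
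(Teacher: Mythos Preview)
The paper does not prove this proposition in the body; the proof is deferred to a technical report. Your overall strategy is sound and is almost certainly what the authors have in mind: obtain $\grd{Y_1} \subseteq \grd{Y_2}$ from the preceding monotonicity proposition, then use the witness $\R_0 \in Y_2 \setminus \grd{Y_1}$ supplied by the hypothesis $Y_2 \nsubseteq \grd{Y_1}$, together with the inclusion $Y_2 \subseteq \grd{Y_2}$, to exhibit $\R_0 \in \grd{Y_2} \setminus \grd{Y_1}$.

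Two remarks on the execution. First, your route to $Y_2 \subseteq \grd{Y_2}$ via Proposition~\ref{prop:yd-eq-scc} and per-s.c.c.\ restriction is a detour; one can argue it directly from the graph inclusion $\DPG{\R} \subseteq \FPG{\R}$: every cycle in $\DPG{\R}$ is already a cycle in $\FPG{\R}$, and if the marking function is monotone in the edge set, a marked cycle in the smaller graph remains marked in the larger. Second, that monotonicity proviso is exactly the soft spot you flag. The paper's abstract definition of a marking function imposes no such requirement, so the inference ``every marked cycle in the smaller graph is already present [and marked] in the larger'' is not automatic; your sentence conflates presence of the cycle with presence of the \emph{marked} cycle. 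Your appeal to $\wa \subseteq Y_1$ as ``situating $Y_2$ among the position-based classes'' is informal---presumably this hypothesis is meant to restrict attention to the concrete markings of Proposition~\ref{prop:markings}, for which the needed monotonicity does hold, but you should make that explicit. Note also that $\wa \subseteq Y_2$ does not by itself give $\agrd \nsubseteq Y_2$ (the paper states these are incomparable), so the alternative of citing Proposition~\ref{prop:y-strict-ygrd} would require an additional argument.
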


The next theorem states that $\upg$ is strictly more powerful than $\dpg$;
moreover,
	the ``jump" from $\grd{Y}$ to $\gerd{Y}$  is at least as large as
	from $Y$ to $\grd{Y}$.%, i.e., there are at least as many sets of rules satisfying $\gerd{Y}$ and not $\grd{Y}$
	%as sets satisfying $\grd{Y}$ and not $Y$.

\begin{theorem}
\label{theo:ygrd-strict-ygerd}
	Let $Y$ be an acyclicity property.
	If $Y \subset \grd{Y}$ then $\grd{Y} \subset \gerd{Y}$.
	Furthermore, there is an injective mapping from the sets of rules satisfying $\grd{Y}$ but not $Y$, to
	the sets of rules satisfying $\gerd{Y}$ but not $\grd{Y}$.
\end{theorem}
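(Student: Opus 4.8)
The plan is to reduce both claims to the construction of the injective mapping, dispatching the easy inclusion first. The inclusion $\grd{Y} \subseteq \gerd{Y}$ follows directly from the inclusion of position graphs $\UPG{\R} \subseteq \DPG{\R}$: every cycle of $\UPG{\R}$ is a cycle of $\DPG{\R}$, and since fewer edges yield fewer reachable successors, the marking of each node in $\UPG{\R}$ is contained in its marking in $\DPG{\R}$ (monotonicity of the marking functions with respect to subgraphs). Hence any marked cycle for an existential position in $\UPG{\R}$ is also a marked cycle in $\DPG{\R}$, so if $\DPG{\R}$ satisfies $Y$ then so does $\UPG{\R}$. For strictness it then suffices to exhibit one set of rules in $\gerd{Y} \setminus \grd{Y}$; since $Y \subset \grd{Y}$ the class $\grd{Y} \setminus Y$ is nonempty, so the injective mapping of the second part, applied to any of its members, produces the required witness.

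For the injective mapping I would start from an arbitrary $\R$ with $\R \models \grd{Y}$ and $\R \not\models Y$. By definition this means $\FPG{\R}$ contains a marked cycle $C$ through some existential position while $\DPG{\R}$ contains none; hence $C$ must traverse at least one transition edge lying in $\FPG{\R} \setminus \DPG{\R}$, i.e. an edge from a head position of some $R_i$ to a same-predicate, same-index body position of some $R_j$ where $R_j$ does not depend on $R_i$. The core of the construction is a gadget, modelled on Example~\ref{ex:dpg-upg}, that rewrites the rules into $\R' = f(\R)$ so that each such ``phantom'' edge is turned into a genuine dependency — making the edge reappear in $\DPG{\R'}$ — while a guard atom over a fresh predicate forces every piece-unifier realizing that dependency to equate the offending body variable with a frontier (rather than the existential) term, so that the edge is \emph{absent} from $\UPG{\R'}$. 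Intuitively the transformation shifts each graph down one level, and I would prove a correspondence showing that the marked cycles through existential positions in $\DPG{\R'}$ correspond to those in $\FPG{\R}$, and those in $\UPG{\R'}$ to those in $\DPG{\R}$. The first correspondence transports $C$ into a marked cycle for an existential position in $\DPG{\R'}$, giving $\R' \not\models \grd{Y}$; the second, together with $\R \models \grd{Y}$, yields that $\UPG{\R'}$ has no such marked cycle, i.e. $\R' \models \gerd{Y}$. Injectivity is obtained because the gadget only adds atoms over fresh predicates not occurring in $\R$, so these can be recognised and deleted to recover $\R$ uniquely from $\R'$.

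The hard part will be the design and verification of the gadget, namely establishing the two cycle correspondences precisely. Making a dependency appear in $\DPG{\R'}$ is easy, but I must ensure simultaneously that (i) no piece-unifier in $\R'$ propagates the dangerous existential position, so that the revived cycle really disappears in $\UPG{\R'}$, and (ii) the guard predicates introduce neither spurious dependencies nor spurious piece-unifiers that could create a \emph{new} marked cycle absent from $\DPG{\R}$; this requires analysing all piece-unifiers of the transformed bodies, including those arising from the agglomerated rules of Definition~\ref{def-agg}. I would also have to check that the marking of $C$ is preserved under the transformation, so that $C$ remains a \emph{marked} cycle at the shifted level; arguing this uniformly in $Y$, rather than case by case over $\wa, \fd, \ar, \ja, \swa$, is the most delicate point.
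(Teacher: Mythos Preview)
Your overall strategy matches the paper's: establish the inclusion from $\UPG{\R} \subseteq \DPG{\R}$, then build a map $\R \mapsto \R'$ that ``shifts each graph down one level'' so that marked cycles of $\DPG{\R'}$ correspond to those of $\FPG{\R}$ and marked cycles of $\UPG{\R'}$ to those of $\DPG{\R}$. The reduction of strictness to the injective construction is also as you describe.

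Where you diverge is in the gadget, and the paper's choice neatly sidesteps the difficulties you list at the end. Rather than patching individual phantom edges with guard atoms in the style of Example~\ref{ex:dpg-upg}, the paper makes the GRD of $\R'$ complete in one stroke. Each $R_i : B_i \to H_i$ is first split through a fresh predicate into $R_{i,1}: B_i \to p_i(\vec X,\vec Y)$ and $R_{i,2}: p_i(\vec X,\vec Y) \to H_i$; then, for every pair $(i,j)$, a fresh unary atom $p'_{j,i}(x_{j,i})$ with a fresh body-only variable is added to the body of $R_{i,1}$, and $p'_{i,j}(z_{i,j})$ with a fresh existential is added to the head of $R_{i,2}$. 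The $p'$ atoms force every $R'_{i,1}$ to depend on every $R'_{j,2}$, so $\DPG{\R'}$ recovers \emph{all} transition edges of $\FPG{\R}$ uniformly, and the marked cycle witnessing $\R \not\models Y$ survives there. In the other direction, because each $p'$ atom carries a single fresh variable that is not in the frontier, any piece-unifier on a $p'$ atom (even through an agglomerated rule) cannot identify any term of the original rules; hence transition edges in $\UPG{\R'}$ between positions over the original predicates arise only from piece-unifiers already present in $\R$, i.e.\ only where $\DPG{\R}$ had an edge. The new existentials $z_{i,j}$ lead to positions whose only successors are non-frontier body variables, so they lie on no cycle. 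This uniform construction eliminates the edge-by-edge case analysis, the concern about agglomerated rules, and the need to argue separately for each $Y$; injectivity is immediate by stripping the fresh predicates to recover $\R$.
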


\begin{proof}
	Assume $Y \subset \grd{Y}$
	and $\R$ satisfies $\grd{Y}$ but  not $Y$.
	$\R$ can be rewritten into $\R'$ by applying the following steps.
	First, for each rule $R_i = B_i[\X,\Y] \rightarrow H_i[\Y,\Z] \in \R$,
	let $R_{i,1} = B_i[\X,\Y] \rightarrow p_i(\X,\Y)$ where $p_i$ is a fresh
	predicate; and $R_{i,2} = p_i(\X,\Y) \rightarrow H_i[\Y,\Z]$.
	Then, for each rule $R_{i,1}$, let $R'_{i,1}$ be the rule $(B'_{i,1} \rightarrow H_{i,1})$
	with $B'_{i,1} = B_{i,1} \cup \{p'_{j,i}(x_{j,i}): \forall R_{j} \in \R\}$,
	where $p'_{j,i}$ are fresh predicates and $x_{j,i}$ fresh variables.
	Now, for each rule $R_{i,2}$, let $R'_{i,2}$ be the rule $(B_{i,2} \rightarrow H'_{i,2})$
	with $H'_{i,2} = H_{i,2} \cup \{p'_{i,j}(z_{i,j}): \forall R_{j} \in \R\}$,
	where $z_{i,j}$ are fresh existential variables.
	Let $\R' = \bigcup\limits_{R_i \in \R} \{R'_{i,1},R'_{i,2}\}$.
    This construction ensures that each  $R'_{i,2}$ depends on $R'_{i,1}$,
	and each $R'_{i,1}$ depends on each $R'_{j,2}$,
	thus, there is a {\em transition} edge from each $R'_{i,1}$ to $R'_{i,2}$
	and from each $R'_{j,2}$ to each $R'_{i,1}$.
	Hence, $\DPG{\R'}$ contains exactly one cycle for each cycle in $\FPG{\R}$.
	 Furthermore, $\DPG{\R'}$ contains at least one marked cycle w.r.t. $Y$,
	and then $\R'$ does not satisfy $\grd{Y}$.
	Now, each cycle in $\UPG{\R'}$
	is also a cycle in $\DPG{\R}$, and, since $\DPG{\R}$ satisfies $Y$, $\UPG{\R'}$ also does.
    Hence, $\R'$ does not belong to $\grd{Y}$ but to $\gerd{Y}$.
\end{proof}

We also check that strict inclusions in the second column in
Figure~\ref{fig:gen} lead to strict inclusions in the third column.

\begin{theorem} \label{theo:ygrd-strict-ygerd2}
	Let $Y_1$ and $Y_2$ be two acyclicity properties.
	If $\grd{Y_1} \subset \grd{Y_2}$ then $\gerd{Y_1} \subset \gerd{Y_2}$.
\end{theorem}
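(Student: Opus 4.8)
The plan is to prove the two inclusions separately: the non-strict part $\gerd{Y_1} \subseteq \gerd{Y_2}$, and then strictness. Throughout I would use that satisfaction of an acyclicity property is monotone under edge removal (a marked cycle in a subgraph is still a marked cycle in the supergraph, since markings only shrink when successors disappear); together with $\UPG{\R} \subseteq \DPG{\R} \subseteq \FPG{\R}$ this already gives $Y \subseteq \grd{Y} \subseteq \gerd{Y}$. The guiding idea is that one and the same abstract position graph can be realized as the $\FPG$, as the $\DPG$, or as the $\UPG$ of (different) rule sets, so that $Y$, $\grd{Y}$ and $\gerd{Y}$ are essentially the same property read on three interchangeable families of graphs.

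For the inclusion, I would first upgrade the hypothesis $\grd{Y_1} \subseteq \grd{Y_2}$ to the base inclusion $Y_1 \subseteq Y_2$. Given any $\R$ with $\FPG{\R} \models Y_1$, I apply the rule-splitting construction from the proof of Theorem~\ref{theo:ygrd-strict-ygerd} to obtain $\R'$ with $\DPG{\R'}$ cycle-isomorphic to $\FPG{\R}$ (and with the same existential positions); then $\R' \models \grd{Y_1}$, hence $\R' \models \grd{Y_2}$ by hypothesis, i.e. $\DPG{\R'} \models Y_2$, hence $\FPG{\R} \models Y_2$. This shows $Y_1 \subseteq Y_2$. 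Next I push this down to $\UPG$: for any $\R$, I realize the position graph $\UPG{\R}$ as $\FPG{\mathcal S}$ for a fresh $\mathcal S$, using a distinct fresh predicate for each transition edge so that $\FPG{\mathcal S}$ has exactly the edges and existential positions of $\UPG{\R}$. Then $\UPG{\R} \models Y_1$ iff $\mathcal S \models Y_1$, which by $Y_1 \subseteq Y_2$ gives $\mathcal S \models Y_2$, i.e. $\UPG{\R} \models Y_2$. Hence $\gerd{Y_1} \subseteq \gerd{Y_2}$.

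For strictness, note first that $\grd{Y_1} \subset \grd{Y_2}$ forces $Y_1 \subset Y_2$, since equality at the base level would propagate to equality of the $\grd$-classes. Pick $\R_0$ with $\DPG{\R_0} \models Y_2$ but $\DPG{\R_0} \not\models Y_1$, so $\DPG{\R_0}$ carries a $Y_1$-marked cycle through an existential position but no $Y_2$-marked one. It suffices to build a rule set $\R''$ whose $\UPG{\R''}$ is cycle- and existential-position-isomorphic to $\DPG{\R_0}$: then $\R'' \models \gerd{Y_2}$ and $\R'' \not\models \gerd{Y_1}$, witnessing the strict inclusion. To build it I would realize the coarse, dependency-induced edges of $\DPG{\R_0}$ by genuine piece-unifiers, introducing a fresh predicate per target edge and shaping the corresponding head and body atoms so that there is a piece-unifier matching exactly the intended position and no other.

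The hard part will be precisely this last construction. In $\DPG{\R_0}$ an edge may be present merely because one rule depends on another along a (possibly transitive) path, with no direct unifier justifying it, whereas in any $\UPG$ every edge must be backed by an actual piece-unifier; reproducing such edges is exactly what the agglomerated-rule mechanism of Definition~\ref{def-agg} is designed for, so I would route the transitive dependencies through agglomerated rules. The delicate point is twofold: I must place existential and frontier variables so that each property's marking is preserved across the isomorphism, and I must guarantee that \emph{no spurious} piece-unifier arises (predicate isolation), since any extra edge could create a new marked cycle and thereby destroy $\UPG{\R''} \models Y_2$. By contrast, the monotonicity argument and the realize-as-$\FPG$ steps used for the inclusion are routine.
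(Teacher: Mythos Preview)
Your strictness argument follows the same route as the paper: start from some $\R$ in $\grd{Y_2}\setminus\grd{Y_1}$ and modify it into $\R'$ so that $\UPG{\R'}$ inherits the cycles of $\DPG{\R}$. The paper's construction is more direct than what you sketch: for every ordered pair $(R_i,R_j)$ connected by a dependency path, and for every frontier variable $x$ of $R_j$ and every head variable $y$ of $R_i$ sharing a common predicate position, it adds a fresh unary atom $p_{i,j,x,y}(y)$ to $H_i$ and $p_{i,j,x,y}(x)$ to $B_j$. These atoms force a piece-unifier identifying $x$ with $y$, so every transition edge of $\DPG{\R}$ reappears in $\UPG{\R'}$; predicate freshness handles exactly the ``no spurious unifier'' concern you raise. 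In particular there is no need to explicitly route anything through agglomerated rules---the fresh atoms sit directly in the original head and body.

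Where you go beyond the paper is the non-strict inclusion $\gerd{Y_1}\subseteq\gerd{Y_2}$: the paper's proof produces only the separating witness and leaves the inclusion implicit. Your two-step realizability argument (lift $\grd{Y_1}\subseteq\grd{Y_2}$ to the base level via the Theorem~\ref{theo:ygrd-strict-ygerd} construction, then push down to $\gerd{}$ by realizing $\UPG{\R}$ as some $\FPG{\mathcal S}$) is sound in spirit, but note that it hinges on the marking function depending only on the abstract graph structure, so that cycle-isomorphisms preserve ``marked''. This holds for the concrete markings ($\wa$, $\fd$, $\ar$, $\ja$, $\swa$) but is an implicit hypothesis in both your argument and the paper's.
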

\begin{proof}
	Let $\R$ be a set of rules such that $\R$ satisfies $\grd{Y_2}$ but
	does not satisfy $\grd{Y_1}$.
	We rewrite $\R$ into $\R'$ by applying the following steps.
	For each pair of rules $R_i,R_j \in \R$ such that there is a dependency path from $R_i$ to $R_j$,%$R_j$ depends on $R_i$,
	for each variable $x$ in the frontier of $R_j$ and each variable $y$ in the
	head of $R_i$, if $x$ and $y$ occur both in a given predicate position,
	we add to the body of $R_j$ a new atom $p_{i,j,x,y}(x)$ and to the head of $R_i$
	a new atom $p_{i,j,x,y}(y)$, where $p_{i,j,x,y}$ denotes a fresh predicate.
	This construction allows each term from the head of $R_i$ to propagate to each
	term from the body of $R_j$, if they share some predicate position in $\R$.
	Thus, any cycle in $\DPG{\R}$ is also in $\UPG{\R'}$, without any change in the
	behavior w.r.t. the acyclicity properties.
	Hence $\R'$ satisfies $\gerd{Y_2}$ but does not satisfy $\gerd{Y_1}$.
\end{proof}

The next result states that  $Y^U$ is a sufficient condition for chase
termination:

\begin{theorem} \label{theo:correct}
Let $Y$ be an acyclicity property ensuring the halting of some chase variant
$C$. Then, the $C$-chase halts for any set of rules $\mathcal R$ that
satisfies $Y^U$ (hence $Y^D$). 	
\end{theorem}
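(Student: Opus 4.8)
The plan is to argue by contraposition, using the \emph{correctness} of the transition-edge set of $\UPG{\R}$ as the central tool. Concretely, I would assume that the $C$-chase does not halt on some $(\Fact,\R)$ and derive that $\R$ fails $Y^U$, i.e.\ that $\UPG{\R}$ contains a marked cycle (w.r.t.\ the marking defining $Y$) through some existential position. Since $Y$ is assumed to ensure termination of the $C$-chase, and $Y$ is exactly the absence of such a marked cycle in $\FPG{\R}$ (Proposition~\ref{prop:markings}), non-termination first yields a marked cycle in $\FPG{\R}$. The whole difficulty is to upgrade this to a marked cycle in the sparser graph $\UPG{\R}$, which is precisely what the agglomerated-rule machinery of Definition~\ref{def-agg} is built to secure.

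First I would make precise the correctness claim announced when the transition edges are introduced: along any $C$-chase derivation, whenever a term occurring in a head position is propagated, through a sequence of rule applications, into another head position (thereby contributing to the creation of a new existential term), this propagation is witnessed by a path in $\UPG{\R}$. Each step of such a propagation is realised by an \emph{actual} rule application, hence by a piece-unifier between a rule body and a (possibly agglomerated) rule head; by Definition~\ref{def-xpg} every such piece-unifier contributes exactly the transition edge that $\UPG{\R}$ retains. Thus, if the $C$-chase produces unboundedly many instantiations of $\fun{term}(\vpos{a}{i})$ for some existential position $\vpos{a}{i}$ (an infinite position), the corresponding chain of propagations must close into a cycle through $\vpos{a}{i}$ in $\UPG{\R}$; and this cycle is \emph{marked}, because the marking of $Y$ depends only on the type of each propagation step, which is faithfully recorded by the retained edge.

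With this correspondence in hand the conclusion is immediate. Non-termination forces some existential position to be infinite: otherwise only finitely many distinct terms, hence finitely many atoms, would ever be produced (recall that the properties considered ensure $C$-finiteness for $C\geq$ skolem), and the chase would stop. The previous step then turns an infinite position into a marked cycle through an existential position in $\UPG{\R}$, contradicting $Y^U$. The statement for $Y^D$ needs no separate argument: by the inclusion $\UPG{\R}\subseteq\DPG{\R}$, every set of rules satisfying $Y^D$ also satisfies $Y^U$, so the $C$-chase halts a fortiori. I expect the main obstacle to be the faithful-propagation step: one must verify, by induction on the derivation, that every cyclic propagation of an existential term produced by an actual $C$-chase run is captured by a piece-unifier cycle in $\UPG{\R}$ \emph{with the correct marking} — that is, that the agglomerated rules $R^j_i$ neither drop a genuine propagation (which would make $\UPG{\R}$ unsound for termination) nor silently alter the marking attached to it.
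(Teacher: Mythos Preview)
The paper does not actually prove Theorem~\ref{theo:correct} in the main text: immediately after the statement it passes to an example, and the introduction defers omitted proofs to an external technical report. So there is no in-paper argument to compare against; I can only assess your proposal on its own.

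Your overall strategy (contraposition via the correctness of the transition edges of $\UPG{\R}$, plus the easy reduction of $Y^D$ to $Y^U$ through the inclusion $\UPG{\R}\subseteq\DPG{\R}$) is the natural one and matches the informal ``correctness'' requirement the paper states for position graphs. However, there is a real gap at the step where you pass from ``there is a cycle through the infinite existential position $\vpos{a}{i}$ in $\UPG{\R}$'' to ``this cycle is a \emph{marked} cycle w.r.t.\ $Y$''. The correctness clause the paper formulates (``if an existential position is infinite, there must be a cycle going through it'') speaks only of cycles, not of marked cycles. For $Y=\wa$ the two coincide (the $\wa$-marking is the full successor set), but for $\fd$, $\ar$, $\ja$, $\swa$ the marking is genuinely restrictive and is \emph{not} a local function of ``the type of each propagation step'' as you assert: e.g.\ the $\ja$ and $\swa$ markings depend on how existential variables are shared across \emph{sets} of positions, not on a single edge. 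Your sentence ``the marking of $Y$ depends only on the type of each propagation step, which is faithfully recorded by the retained edge'' is therefore not a proof; it is exactly the non-trivial claim that needs to be established, separately for each marking (or via a uniform argument that the edges suppressed when passing from $\FPG{\R}$ to $\UPG{\R}$ can never participate in an actual chase propagation, hence never in a ``real'' marked cycle witnessing non-termination).

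A second, smaller issue: you invoke Proposition~\ref{prop:markings} to say that ``$Y$ is exactly the absence of a marked cycle in $\FPG{\R}$''. That proposition is stated only for the concrete properties $\wa,\fd,\ar,\ja,\swa$, whereas Theorem~\ref{theo:correct} is phrased for an arbitrary acyclicity property $Y$ (as defined via a marking function). You should either restrict your argument to those properties, or make explicit that you are taking $Y$ to be, by definition, ``absence of a marked cycle in $\FPG{\R}$'' for some marking function --- which is how the paper uses the term, but which you rely on without saying so.
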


\begin{example} Consider again the set of rules $\R$ from Example \ref{ex:dpg-upg}. Figure \ref{fig:dpg-upg}  pictures
 the associated position graphs $\DPG{\R}$ and $\UPG{\R}$. $\R$ is not \emph{aGRD}, nor \emph{wa},
 nor \emph{wa}$^D$ since $\DPG{\R}$ contains a (marked) cycle that goes through the existential position $\vpos{t(v,w)}{2}$.
 However, $\R$ is obviously \emph{wa}$^U$ since $\UPG{\R}$ is acyclic. Hence, the skolem chase and stronger chase variants
 halt for  $\R$ and any set of facts.

\end{example}

Finally, we remind that classes from $\wa$ to $\swa$ can be recognized in
PTIME, and checking $\agrd$ is \coNPc. Hence, as stated by the next result,
the expressiveness gain is without increasing worst-case complexity.

\begin{theorem}[Complexity]
\label{theo:complexity}
	Let $Y$ be an acyclicity property, and $\R$ be a set of rules.
	If checking that $\R$ satisfies $Y$ is in \coNP, then
	checking that $\R$ satisfies $\grd{Y}$ or $\gerd{Y}$ is \coNPc.
\end{theorem}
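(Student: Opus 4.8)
\noindent The plan is to prove the two directions of \coNPc{} separately --- membership in \coNP{} and \coNP-hardness --- handling $\grd{Y}$ and $\gerd{Y}$ in parallel and reducing everything to the marked-cycle characterization already at hand.

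\noindent\textbf{Membership.} By Proposition~\ref{prop:yd-eq-scc}, $\R$ satisfies $\grd{Y}$ iff $\DPG{\R}$ satisfies $Y$, and by definition $\R$ satisfies $\gerd{Y}$ iff $\UPG{\R}$ satisfies $Y$. Hence the complement problem asks for a bad marked cycle (w.r.t.\ $Y$) in $\DPG{\R}$ (resp.\ $\UPG{\R}$), and I would show this complement is in \NP{} by exhibiting a polynomial certificate. The certificate consists of a candidate cycle $C$, the data showing that $C$ is marked and passes through an existential position (checkable because, by hypothesis, ``$\R$ satisfies $Y$'' is in \coNP, so the existence of a bad marked cycle is in \NP), and, for every transition edge of $C$, a short proof that the edge genuinely belongs to the graph. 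A transition edge of $\DPG{\R}$ from a head position of $R_i$ to a body position of $R_j$ is certified by a directed path $R_i \rightsquigarrow R_j$ in $\GRD[\R]$ together with one piece-unifier witnessing each dependency along it; for $\UPG{\R}$ the certificate is instead a single-path agglomerated rule (Definition~\ref{def-agg}) plus the piece-unifier realizing the required term equality. All of this is polynomial. The one delicate point is soundness: a marking function might a priori depend on \emph{absent} edges, and certifying an absence is a \coNP{} task that would break the \NP{} bound. I would dispose of this by observing that every marking in scope (from $\wa$ through $\swa$) is monotone in the edge set, being computed by forward propagation along edges; adding edges can only enlarge markings and create marked cycles, so a bad marked cycle built from certified-present edges of a subgraph is already a bad marked cycle of the true graph, and no negative information is needed. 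This gives complement $\in \NP$, i.e.\ $\grd{Y},\gerd{Y}\in\coNP$.

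\noindent\textbf{Hardness.} Recalling that deciding $\agrd$ is \coNPc, I would reduce $\agrd$ to $\grd{Y}$. Given $\R$, build $\R'$ by adding to each rule $R_i$ (with a chosen frontier variable $x_i$) the body atom $s(x_i)$ and the head atom $s(z_i)$, where $s$ is a fresh common predicate and $z_i$ a fresh existential variable. Two claims drive the reduction. First, $\R'$ has exactly the same GRD as $\R$: an original piece-unifier, which covers only a sub-body, is unaffected by the new atoms, while any attempt to use the new atoms to manufacture a dependency would force unifying the separating variable $x_j$ with the existential $z_i$, which the piece-unifier definition forbids. Second, every GRD cycle $R_1\to\cdots\to R_m\to R_1$ induces a cycle in $\DPG{\R'}$ through the existential positions $\langle s(z_i),1\rangle$: the transition edges $\langle s(z_i),1\rangle\to\langle s(x_{i+1}),1\rangle$ exist since $R_{i+1}$ depends on $R_i$, and the intra-rule edges $\langle s(x_i),1\rangle\to\langle s(z_i),1\rangle$ exist because $x_i$ is frontier and the target is existential. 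This cycle is a genuine existential-propagation cycle, dangerous even for the most permissive property $\swa$, hence bad-marked for every $Y$ in the $\coNP$ range. Conversely, if $\GRD[\R]$ is acyclic then $\DPG{\R'}$ has no transition-edge cycle (intra-rule edges alone go body-to-head and cannot close), so $Y$ is satisfied. Thus $\R'$ satisfies $\grd{Y}$ iff $\GRD[\R]$ is acyclic, yielding \coNP-hardness of $\grd{Y}$.

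\noindent\textbf{The main obstacle, and $\gerd{Y}$.} The cycle constructed above need not survive in $\UPG{\R'}$: the edge $\langle s(z_i),1\rangle\to\langle s(x_{i+1}),1\rangle$ requires a piece-unifier actually equating $x_{i+1}$ with $z_i$, which is blocked precisely because $z_i$ is existential and $x_{i+1}$ is separating. This tension --- a position must be existential to make the cycle dangerous, yet must be carried by an honest piece-unifier to appear in $\UPG$ --- is the crux of the hardness proof for $\gerd{Y}$. I would resolve it by threading the gadget through the real piece-unifiers using the atom-addition device from the proof of Theorem~\ref{theo:ygrd-strict-ygerd2}: for each dependency-connected pair $(R_i,R_j)$ and each pair of variables sharing a predicate position, add matching fresh atoms to the head of $R_i$ and the body of $R_j$, so that every cycle of $\DPG$ reappears as a cycle of $\UPG$ of the transformed set. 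Composing the $\agrd$-reduction above with this transformation yields a single polynomial map $\R\mapsto\R''$ with $\gerd{Y}(\R'')$ holding iff $\GRD[\R]$ is acyclic. The real work is to verify that the combined construction simultaneously preserves the GRD, keeps the injected cycle dangerous for the given $Y$, and makes that cycle realized by actual piece-unifiers in $\UPG$, all without introducing spurious dependencies; once these three invariants are checked, the remaining argument is routine bookkeeping.
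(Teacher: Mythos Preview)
The paper omits the proof of Theorem~\ref{theo:complexity} (it is deferred to an external technical report), so there is no reference argument to compare against; the comments below assess your proposal on its own merits.

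\textbf{Membership.} Your argument is essentially sound. One small imprecision: for a transition edge of $\UPG{\R}$ you propose certifying a \emph{single-path} agglomerated rule, but Definition~\ref{def-agg} allows the set $T\subseteq\fun{terms}(H_i)$ to be realised by several paths simultaneously, and a multi-path agglomerated rule may admit strictly more piece-unifiers than any single-path one (each added $\fun{fr}(t)$ demotes an existential to frontier). The fix is easy---guess $T$ directly (polynomial size) and then guess one witnessing path per $t\in T$---but your write-up should say so. Your monotonicity remark is the right way to dispose of the need to certify \emph{absent} edges; note that this tacitly restricts the theorem to monotone marking functions, which is also an implicit restriction of the theorem statement itself (for the trivially-true $Y$, $\grd{Y}$ is in $\PP$).

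\textbf{Hardness for $\grd{Y}$.} Two issues. First, your construction presupposes that every rule has a frontier variable, which need not hold; you should say how you handle frontier-free rules (a small auxiliary gadget suffices, but it must be spelled out). Second, your one-line justification that ``an original piece-unifier is unaffected by the new atoms'' is not literally true: adding $s(x_j)$ to $B_j$ can make $x_j$ newly separating and kill a piece-unifier that sent $x_j$ to an existential of $H_i$. The \emph{dependency} is still preserved, but only because the broken piece-unifier can be repaired by enlarging the piece to include $s(x_j)$ and unifying it with $s(z_i)$; this recovery step deserves an explicit sentence.

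\textbf{Hardness for $\gerd{Y}$.} Here there is a genuine gap. You propose composing your $\agrd$-reduction with the transformation from the proof of Theorem~\ref{theo:ygrd-strict-ygerd2}, but that transformation is defined \emph{only for pairs $(R_i,R_j)$ connected by a dependency path}, so applying it requires first computing (the transitive closure of) $\GRD{\R'}$---precisely the $\NP$-hard information the reduction is meant to hide. Applying the transformation blindly to \emph{all} pairs does not rescue the argument either: when $y$ in $H_i$ is a frontier (non-existential) variable, the fresh atoms $p_{i,j,x,y}(y)$ in the head and $p_{i,j,x,y}(x)$ in the body yield a legitimate piece-unifier between $R_i$ and $R_j$ even when no dependency existed before, so the GRD of the output no longer reflects $\GRD{\R}$ and the ``acyclic $\Rightarrow$ satisfies $\gerd{Y}$'' direction fails. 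To make the $\gerd{Y}$ hardness go through you need a gadget that (i) is defined uniformly without inspecting the GRD, (ii) creates no spurious dependencies, and (iii) forces any GRD cycle to be witnessed by an actual piece-unifier chain realising the $s$-cycle in $\UPG$. Your current sketch does not provide this, and it is the crux of the proof.
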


\section{Further Refinements}
\label{sec:unif}

%%%%%%%%%%%%%%%%%%%%%%%%%%%% ECAI VERSION %%%%%%%%%%%%%%%%%%%%%%%%%%%%%%%%%%%%%%%%%%%%%%%%%%%%%

Still without complexity increasing, we can further extend $Y^U$ into $Y^{U^+}$
by a finer analysis of marked cycles and unifiers.
We define the notion of \emph{incompatible} sequence of unifiers,
which ensures that a given sequence of rule applications is impossible.
Briefly, a marked cycle for which all sequences of unifiers are incompatible
 can be ignored.
 Beside the gain for positive rules,
this refinement will allow one to  take better advantage of negation.

We first point out that the notion of piece-unifier is not appropriate to our purpose.
We have to relax it, as illustrated by the next example.
We call \emph{unifier}, of a rule body $B_2$ with a
rule head $H_1$, a substitution $\mu$ of $\fun{vars}(B'_2) \cup \fun{vars}(H'_1)$,
where $B'_2 \subseteq B_2$ and $H'_1
\subseteq H_1$, such that  $\mu(B'_2) = \mu(H'_1)$
(thus, it satisfies Condition $(1)$ of a piece-unifier).

\begin{example} \label{ex-unif} Let $\mathcal R = \{R_1, R_2, R_3, R_4\}$ with:\\
$R_1 : p(x_1,y_1) \rightarrow q(y_1,z_1)$$~~~$\\
$R_2 : q(x_2,y_2) \rightarrow r(x_2,y_2)$\\
$R_3 : r(x_3,y_3) \wedge s(x_3,y_3) \rightarrow p(x_3,y_3)$ \\ $R_4 : q(x_4,y_4) \rightarrow s(x_4,y_4)$\\ %(where $H$ can be anything)\\
%$R_5 : H \rightarrow s(x_5,y_5)$\\
There is a dependency cycle $(R_1, R_2, R_3, R_1)$ and a corresponding cycle in $PG^U$.
We want to know if such a sequence of rule applications is possible.
We build the following new rule, which is a composition of $R_1$ and $R_2$ (formally defined later):
%whose application corresponds to an application of $R_1$ directly followed by an application of $R_2$:\\
$R_1 \unified_\unifier R_2 : p(x_1,y_1) \rightarrow q(y_1,z_1) \wedge r(y_1,z_1)$\\
There is no piece-unifier of $R_3$ with $R_1 \unified_\unifier R_2$, since $y_3$ would be a separating variable mapped to the existential variable $z_1$.
This actually means that $R_3$ is not applicable \emph{right after} $R_1 \unified_\unifier R_2$.
However, the atom needed to apply $s(x_3,y_3)$ can be brought by a sequence of rule applications $(R_1,R_4)$.
We thus relax the notion of piece-unifier to take into account arbitrary long sequences of rule applications.
\end{example}

%\hspace{0.5cm}\textbf{J'ai un peu changé l'exemple, histoire de le laisser aussi simple que possible}

%\textbf{Condenser $R_4$ et $R_5$ en une seule regle ?} % Oui.

%\textbf{Ici, definir incompatible plutot que compatible ? Mais on a besoin de compatible pour les regles inferees}

\begin{definition}[Compatible unifier]
	Let $R_1$ and $R_2$ be rules.
	A unifier $\unifier$ of $B_2$ with $H_1$ is {\em compatible}
	if, for each position $\vpos{a}{i}$ in $B'_2$,
	such that $\unifier(\fun{term}(\vpos{a}{i}))$ is an existential
	variable $z$ in $H'_1$, $\UPG{\R}$ contains a path,
       from a position in which $z$ occurs,
	%with $\fun{term}(\vpos{a'}{i'}) = z$
	to $\vpos{a}{i}$, that does not go through another existential position. Otherwise, $\unifier$ is \emph{incompatible}.
\end{definition}

Note that a piece-unifier is necessarily compatible.

%\textbf{IL FAUDRAIT PRECISER LE LIEN AVEC LES PIECE-UNIFIERS PUISQU'ENSUITE ON VA EXAMINER DES CHEMINS DE PIECE-UNIFIERS. DONNER UN ARGUMENT QUI MONTRE QU'ON NE RATE PAS DES CHEMINS EN PARTANT DE PIECE-UNIFIERS ET PAS DE COMPATIBLE UNIFIERS. }

%\hspace{0.5cm} \textbf{????? Pas sûr de comprendre cette remarque...}

\begin{proposition}
\label{prop:inc-unif-noapp}
Let $R_1$ and $R_2$ be rules, and let $\mu$ be a unifier of $B_2$ with $H_1$. If $\mu$ is incompatible, then no application of $R_2$ can use an atom in $\mu(H_1)$.
\end{proposition}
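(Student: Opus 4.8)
The plan is to argue by contraposition on the flow of a single null value, using $\UPG{\R}$ as a faithful record of how the nulls created for existential variables can propagate through a derivation. Suppose, towards a contradiction, that some application of $R_2$, by a homomorphism $\pi$ into an atomset $F$ occurring in a derivation, uses an atom of $\mu(H_1)$ and realizes the unifier $\mu$. Since $\mu$ is incompatible, fix a witnessing position $\vpos{a}{i}$ in $B'_2$ such that $\mu(\fun{term}(\vpos{a}{i}))$ is an existential variable $z$ of $H'_1$ and such that $\UPG{\R}$ contains no path from a position where $z$ occurs to $\vpos{a}{i}$ avoiding all other existential positions. When the atoms $\mu(H_1)$ are produced, $z$ is instantiated by a fresh null $n_z$; realizing $\mu$ forces $\pi(\fun{term}(\vpos{a}{i})) = n_z$, so the atom $\pi(a) \in F$ must carry $n_z$ at position $i$.

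The crux is the following propagation lemma: if a null $n$, born at an existential position $e$ of some rule head during a derivation, occurs at position $i$ of the atom $\pi(a)$ at the moment $R_2$ is applied by $\pi$, then $\UPG{\R}$ contains a path from $e$ to $\vpos{a}{i}$ that does not traverse any existential position other than $e$ itself. Granting this lemma, apply it with $n = n_z$ and $e$ a position where $z$ occurs in $H_1$: we obtain exactly a path forbidden by the choice of $\vpos{a}{i}$, a contradiction. Hence no application of $R_2$ can use an atom of $\mu(H_1)$, which is the claim.

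To prove the propagation lemma I would induct on the number of rule applications separating the birth of $n$ from the atom read by $\pi$. The two ways a null can move are mirrored by the two kinds of edges of the position graph: within one rule application, a value is carried from a body position to a head position exactly along the intra-rule edges of $\PG{\R}$, and these never land on an existential head position (an existential position always receives a \emph{new} null, not the incoming value); across rule applications, a produced head atom is reused by a later body, which corresponds to a transition edge of $\UPG{\R}$, present precisely because the reuse is witnessed by a piece-unifier. Chaining these steps yields a walk in $\UPG{\R}$ from $e$ to $\vpos{a}{i}$, and since the carried value is always the single null $n$, the walk can meet an existential position only at its source $e$, because any intermediate existential position would mint a fresh null distinct from $n$.

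The main obstacle is the across-rule step: I must check that every actual reuse of a produced atom by a later rule body is captured by a transition edge of $\UPG{\R}$, i.e. that it is always witnessed by a piece-unifier and hence recorded. This is where the relaxation from piece-unifier to unifier matters: the edge realizing $\mu$ itself need not be a piece-unifier, since the separating variable $\fun{term}(\vpos{a}{i})$ is unified with the existential $z$, so it is \emph{absent} from $\UPG{\R}$; consequently $n_z$ can reach $\vpos{a}{i}$ only through a genuine chain of piece-unifier-recorded reuses, which is exactly the existential-avoiding path the lemma produces and the incompatibility hypothesis forbids. Making this bookkeeping of reuses precise, together with the fresh-null invariant at existential positions, is the only delicate part of the argument.
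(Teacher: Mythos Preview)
The paper does not prove this proposition in the main text; it is among the results whose proofs are deferred to the accompanying technical report, so there is no in-paper argument to compare against directly.

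Your approach via a null-propagation lemma for $\UPG{\R}$ is the natural one and almost certainly matches what the authors intend, since it is precisely the invariant underlying the correctness of $\UPG{\R}$ (Theorem~\ref{theo:correct}). Two points deserve tightening. First, you treat a transition edge of $\UPG{\R}$ as being ``witnessed by a piece-unifier'' of $B_j$ with $H_i$, but Definition~\ref{def-xpg} requires a piece-unifier of $B_j$ with the head of an \emph{agglomerated} rule $R^j_i$ (Definition~\ref{def-agg}); your inductive step across rule applications must therefore produce such a piece-unifier with a suitable agglomerated rule, not merely with $H_i$ itself. Second, your last paragraph asserts that $\fun{term}(\vpos{a}{i})$ is a separating variable of $B'_2$; the definition of incompatibility does not force this. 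The unifier $\mu$ may fail to be a piece-unifier because some \emph{other} body variable is separating and mapped to an existential, while $\fun{term}(\vpos{a}{i})$ lies entirely inside $B'_2$. In that case a direct head-to-body transition edge at $\vpos{a}{i}$ could still be present in $\UPG{\R}$, witnessed by a different, smaller piece-unifier, and your reason for why ``the edge realizing $\mu$ is absent'' does not go through as stated. Neither issue breaks the overall strategy---the propagation lemma, once properly established, yields the contradiction immediately---but the bookkeeping you flag as ``the only delicate part'' genuinely requires the agglomerated-rule machinery rather than plain piece-unifiers.
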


%No piece-unifier is incompatible when $R_1$ and $R_2$ are rules
%from $\R$.
%However, we will build more complex rules from the initial ones for which
%there will be incompatible unifiers.

We define the rule corresponding to the composition of $R_1$ and $R_2$ according to a
compatible unifier, then use this notion to define a compatible sequence of unifiers.

\begin{definition}[Unified rule, Compatible sequence of unifiers]
\label{def:unified-rule}$~$\\
$\bullet$ Let $R_1$ and $R_2$ be rules such that there is a compatible unifier $\unifier$
	 of $B_2$ with $H_1$.
	The associated {\em unified rule} $R_\unifier = R_1 \unified_\unifier R_2$ is defined by $H_\unifier = \unifier(H_1) \cup \unifier(H_2)$,
		%\item $B_\unifier = \unifier(B_1) \cup (\unifier(B_2) \setminus H_\unifier)$	%%%% vérifier pourquoi H_mu ??? TODO
		 and $B_\unifier = \unifier(B_1) \cup (\unifier(B_2) \setminus \unifier(H_1))$.\\
	$\bullet$ Let $(R_1, \ldots, R_{k+1})$ be a sequence of rules. A sequence $s = (R_1~\mu_1 ~R_2 \ldots ~\mu_{k}~R_{k+1})$,
	where, for $1 \leq i \leq k$,  $\unifier_i$ is a unifier of $B_{i+1}$ with $H_i$,
	 is  a \emph{compatible sequence} of unifiers if: \emph{(1) } $\mu_1$ is a compatible unifier of $B_2$ with $H_1$, and\emph{ (2) }
	 if $k > 0$, the sequence obtained from $s$ by replacing $(R_1 ~\mu_1~R_2)$ with
	$R_1 \unified_{\unifier_1} R_{2}$ is a \emph{compatible} sequence of unifiers.
\end{definition}

E.g., in Example~\ref{ex-unif}, the sequence $(R_1~\mu_1~R_2~\mu_2~R_3~\mu_3~R_1)$, with the obvious $\mu_i$, is compatible.
 We can now improve all previous acyclicity properties (see the fourth column in Figure~\ref{fig:gen}).

\begin{definition}[Compatible cycles]
	Let $Y$ be an acyclicity property, and $\upg$ be a position graph with unifiers.
	The {\em compatible cycles for $\vpos{a}{i}$} in $\upg$ are all marked cycles $C$ for $\vpos{a}{i}$ wrt $Y$,
	such that there is a compatible sequence of unifiers induced by $C$.
	%We define the {\em compatible} cycles $C_Y^{U+}$ in $C_Y^U$
	%by all cycles $C \in C_Y^U$ such that there is a compatible unifier sequence induced
	%by $C$.
	Property $\gerdc{Y}$is satisfied if, for each existential position $\vpos{a}{i}$, there is no compatible cycle for $\vpos{a}{i}$ in $\upg$.
\end{definition}

Results similar to Theorem \ref{theo:ygrd-strict-ygerd} and Theorem  \ref{theo:ygrd-strict-ygerd2} are obtained for $Y^{U^+}$ w.r.t. $Y^U$, namely:
\begin{itemize}
\item For any acyclicity property $Y$, $\gerd{Y} \subset \gerdc{Y}$.
\item For any acyclicity properties $Y_1$ and $Y_2$, if $\gerd{Y_1} \subset \gerd{Y_2}$, then $\gerdc{Y_1} \subset \gerdc{Y_2}$.
\end{itemize}

Moreover, Theorem \ref{theo:correct} can be extended to $Y^{U^+}$:  let $Y$ be an acyclicity property ensuring the halting of some chase variant $C$;
then the $C$-chase halts for any set of rules $\mathcal R$ that satisfies $Y^{U^+}$ (hence $Y^U$).
Finally, the complexity result from Theorem \ref{theo:complexity} still holds
 for this improvement.

\section{Handling Nonmonotonic Negation}
\label{sec:negation}
We now add nonmonotonic negation, which we denote by \textbf{not}. A \emph{nonmonotonic existential rule} (NME rule) $R$ is of the form $(B^+, \fun{\bf not} B^-_1, \ldots, \fun{\bf not} B^-_k \rightarrow H)$, where $B^+$, $B^-_i$ and $H$ are atomsets, respectively called the \emph{positive} body, the \emph{negative} bodies and the head of $R$. Note that we generalize the usual notion of negative body by allowing to negate conjunctions of atoms. Moreover, the rule head may contain several atoms.
However, we  impose a safeness condition: $\forall 1 \leq i \leq k$, $\fun{vars}(B^-_i) \subseteq \fun{vars}(B^+)$.
%As a generalization of existential rules, an existential variable in the head of an NME rule must be considered as existentially quantified.
The formula assigned to $R$ is  $\Phi^{not}(R) = \forall \vec{x}\forall \vec{y} (\phi(B^+) \wedge \fun{\bf not}\phi(B^-_1), \ldots, \fun{\bf not}\phi(B^-_k) \rightarrow \exists \vec{z} \phi(H)$.
We write $\fun{pos}(R)$ the existential rule obtained from $R$ by removing its negative bodies, and $\fun{pos}(\mathcal R)$ the set of all $\fun{pos}(R)$ rules, for $R \in \mathcal R$.
% $R$ is said to be \emph{positive} if $R = \fun{pos}(R)$. %% INUTILE?
%, it thus corresponds to an existential rule and has the same semnatics.

\subsubsection{About our Stable Model Semantics}

Answer Set Programming \cite{gelfond2007} introduced stable model semantics for propositional logic, and was naturally extended to grounded programs ({\it i.e.,} sets of NME rules without variables). In this framework, the semantics can be provided through the Gelfond-Lifschitz reduct operator that allows to compute a saturation ({\it i.e.,} a chase) using only grounded NME rules. This semantics can be easily extended to rules with no existential variable in the head, or to skolemized NME rules, as done, for instance, in \cite{magka2013computing}. The choice of the chase/saturation mechanism is here irrelevant, since no such mechanism can produce any redundancy.

The problem comes when considering existential variables in the head of rules. Several semantics have been proposed in that case, for instance circumscription in \cite{Ferraris2011}, or justified stable models in \cite{you2013disjunctive}. We have chosen not to adopt circumscription since it translates NME rules to second-order expressions, and thus would not have allowed to build upon results obtained in the existential rule formalism. In the same way, we have not considered justified stable models, whose semantics does not correspond to stable models on grounded rules, as shown by the following example:

\begin{example} Let $\Pi_1 = \{\emptyset \rightarrow p(a);  p(a), \fun{\bf not} \, q(a) \rightarrow t(a).\}$ be a set of ground NME rules. Then $\{p(a); q(a)\}$ is a justified stable model, but not a stable model. Let $\Pi_2 = \{\emptyset \rightarrow p(a); p(a), \fun{\bf not} \, q(b) \rightarrow t(a)\}$ . Then $\{p(a); t(a)\}$ is a stable model but not a justified stable model.
\end{example}

Let us now recast the Gelfond-Lifschitz reduct-based semantics  in terms of the skolem-chase. Essentially (we will be more precise in the next section), a stable model $M$ is a possibly infinite atomset produced by a skolem-chase that respects some particular conditions:
\begin{itemize}
    \item all rule applications are sound, {\it i.e.,} none of its negative bodies can be found in the stable model produced (the rule is not blocked);
    \item the derivation is complete, {\it i.e.,} any rule applicable and not blocked is applied in the derivation.
\end{itemize}

In the next subsection, we formally define the notion of a stable model, while replacing the skolem-chase with any $C$-chase. We thus obtain a family of semantics parameterized by the considered chase, and  define different notions of $C$-stable models. %It is to be noted that, when the chase is skolem, our definition corresponds to the one of \cite{magka2013computing} (and thus of the usual semantics for grounded NME rules).% We also present an algorithm, inspired by \cite{asperix09}, that rely upon different chase strategies to compute these $C$-stable models.

%We can associate with $R$ a formula with nonmonotonic negation  $\Phi^{not}(R) = \forall \vec{x}\forall \vec{y} (\phi(B^+) \wedge \fun{\bf not}\phi(B^-_1), \ldots, \fun{\bf not}\phi(B^-_k) \rightarrow (\exists \vec{z} \phi(H))$.
%A NME rule is said \emph{positive} if it has no negative body, it thus corresponds to an existential rule and has the same semantics. If $R$ is a NME rule, we note $\fun{pos}(R)$ the positive rule obtained by removing its negative bodies.

\subsubsection{On the  Chase and Stable Models}

We define a notion of stable model directly on nonmonotonic existential rules and provide a derivation algorithm inspired from the notion of computation in \cite{Liu2010295} and Answer Set Programming solvers that instantiate rules  on the fly \cite{asperix09,dao2012omiga} instead of grounding rules before applying them. The difference with our framework is that they consider normal logic programs, which are a generalization of skolemized NME rules.

%in which existential variables would be replaced by functional terms.
%
%TO MERGE
%
%All definitions of stable model semantics rely upon some kind of completion mechanism to compute the consequences, be it to compute the saturation of a reduct (in the case where all rules are grounded, so the choice of a chase or another does not change anything) or, for instance when the rules are not grounded, through a computation such as in \cite{Liu2010295}.
A natural question is then to understand if the choice of a chase mechanism has an impact, not only on the termination,  but also  on the semantics. Thus, we consider the chase as a parameter. Intuitively, a $C$-stable set $A$ is produced by a $C$-chase that, according to \cite{gelfond2007}, must satisfy the NME rules (we say that it is \emph{sound}, {\it i.e.,} that no negative body appearing in the chase is in $A$) and the \emph{rationality principle} (the sound chase does not generate anything that cannot be believed, and it must be complete: any rule application not present in the chase would be unsound).

To define \emph{$C$-stable sets}, we first need to introduce additional notions. A \emph{NME $\mathcal R$-derivation from $F$} is a $\fun{pos}(\mathcal R)$-derivation from $\mathcal R$. This derivation $D = (F_0 = \sigma_0(F), \ldots, \sigma_k(F_k), \ldots)$ produces
a possibly infinite atomset $A$. Let $R$ be a NME rule such that $\fun{pos}(R)$ was applied at some step $i$ in $D$, {\it i.e.,} $F_{i+1} = \alpha(\sigma_i(F_i), \fun{pos}(R), \pi_i)$. We say that this application is \emph{blocked} if one of the $\pi_i(B^-_q)$ (for any negative body $B_q^-$ in $R$) can be found in $A$. This can happen in two ways. Either $\pi_i(B^-_q)$ can already be found in $\sigma_i(F_i)$ or it appears later in the derivation.
In both  cases, there is a $\sigma_j(F_j)$ (with $j \geq i$) that contains the atomset $\pi_i(B^-_q)$, \emph{as transformed by the sequence of simplifications from $F_i$ to $F_j$}, {\it i.e.,} there exists $F_j$ with $j \geq i$ s.t. the atomset $\sigma_{i \rightarrow j}(\pi_i(B^-_q)) = \sigma_j(\ldots(\sigma_{i+1}(\pi_i(B^-_q)))\ldots)$ is included in $\sigma_j(F_j)$.
We say that a derivation $D$ is \emph{sound} when no rule application is blocked in $A$.  A sound derivation is said to be \emph{complete} when adding any other rule application to the derivation would either make it unsound, or would not change the produced atomset. The derivation is a $C$-chase when the $\sigma_i$ used at each step is determined by the criterion $C$.

\begin{definition}[$C$-stable sets]\label{def:cstable} Let $F$ be a finite atomset, and $\mathcal R$ be a set of NME rules. We say that a (possibly infinite) atomset $A$ is $C$-stable for $(F, \mathcal R)$ if there is a complete sound nonmonotonic $C$-chase from $F$ that produces $A$.
\end{definition}

\begin{proposition} If $\mathcal R$ is a set of existential rules, then there is a unique $C$-stable set, which is equivalent to the universal model $(F,\mathcal R)^C$. If $\{F\} \cup \mathcal R$ is a set of skolemized NME rules (with $F$ being seen as a rule with empty body), then its skolem-stable sets  are in bijection with its stable models.
\end{proposition}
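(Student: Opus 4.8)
The plan is to treat the two assertions separately, since they rely on different mechanisms. For the first one, the key observation is that when $\mathcal R$ contains only positive existential rules, every NME $\mathcal R$-derivation is just an ordinary $\fun{pos}(\mathcal R)$-derivation, i.e.\ a plain $C$-chase, and the notion of being \emph{blocked} becomes vacuous: there are no negative bodies, so no rule application can ever be blocked. Consequently every such derivation is automatically \emph{sound}, and the definition of a complete sound $C$-chase collapses onto the ordinary notion of a complete $C$-chase from Section~\ref{sec:chase} (the clause ``would make it unsound'' never fires). I would then invoke the fact already recorded there, namely that a complete $C$-chase produces a universal model of $(F,\mathcal R)$, to conclude that every $C$-stable set is such a universal model. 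Uniqueness follows because universal models are unique up to homomorphic equivalence; thus all $C$-stable sets are equivalent to $(F,\mathcal R)^C$, and there is, up to equivalence, exactly one.

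For the second assertion I would first reduce the setting to a purely ground one. Since $F$ is viewed as a rule $\emptyset \rightarrow F$ and the rules are skolemized, all existential variables are replaced by skolem terms, so every atom produced by the chase is ground over the Herbrand base generated by the constants and skolem functions of $\{F\}\cup\mathcal R$. Moreover, the skolem chase is the oblivious chase run on skolemized rules, hence it never deletes atoms: each $\sigma_i$ is trivial and the produced atomset is simply $A=\bigcup_i F_i$. This makes the \emph{blocked} condition of Definition~\ref{def:cstable} take the transparent form ``the application of $\fun{pos}(R)$ via $\pi_i$ is blocked iff $\pi_i(B^-_q)\subseteq A$ for some negative body $B^-_q$'', where $\pi_i(B^-_q)$ is well defined thanks to the safeness condition $\fun{vars}(B^-_q)\subseteq\fun{vars}(B^+)$. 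This is exactly the condition under which the corresponding ground instance of $R$ is discarded when forming the Gelfond--Lifschitz reduct with respect to $A$.

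With this dictionary in place, I would establish the bijection as an identity on atomsets by proving the two inclusions. Given a skolem-stable set $A$, soundness means that every applied rule instance survives in the reduct $(\{F\}\cup\mathcal R)^A$, while completeness means that every surviving applicable instance has been applied; since forward chaining on the positive reduct computes precisely its least model, $A$ coincides with the least model of its own reduct, i.e.\ $A$ is a stable model. Conversely, given a stable model $M$, I would schedule the derivation of the least model of $(\{F\}\cup\mathcal R)^M$ as a nonmonotonic skolem chase that applies exactly the instances not blocked by $M$; this derivation is sound (no applied negative body lies in $M=A$) and complete (it reaches the full least model), so $M$ is a skolem-stable set. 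As the two maps send an atomset to itself, they are mutually inverse, giving the announced bijection.

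The main obstacle I anticipate is the self-referential nature of the \emph{blocked} condition: blocking is tested against the \emph{final} produced atomset $A$, not against the intermediate $\sigma_i(F_i)$, so a rule application that is harmless at the moment it is made may be retroactively blocked once a later step adds an atom to one of its negative bodies. The delicate point is therefore to show that soundness with respect to $A$ together with completeness forces $A$ to be a fixpoint, i.e.\ closed under exactly those rule instances whose negative bodies are absent from $A$, which is precisely the stable-model fixpoint equation $A=\mathrm{lfp}((\{F\}\cup\mathcal R)^A)$. Establishing this, and checking that the monotone forward-chaining computation genuinely yields the \emph{least} (hence minimal and supported) model of the reduct rather than some larger model of it, is where the real work lies; the grounding and the triviality of the $\sigma_i$ for the skolem chase are what keep this argument clean.
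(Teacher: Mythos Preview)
Your proposal is correct, and for the first assertion it follows essentially the same idea as the paper's sketch: positive rules have no negative bodies, so no application is ever blocked, soundness is vacuous, and a complete sound nonmonotonic $C$-chase is nothing but an ordinary complete $C$-chase, which produces a universal model. The paper phrases this as ``existential rules generate a unique branch'' (in the $C$-derivation tree there are no negative children $N_i^-$, so no branching), but the content is the same.

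For the second assertion the approaches diverge. The paper gives no argument at all beyond observing that Definition~\ref{def:cstable} ``mimics the behavior'' of the ASP\'eRIX solver and invoking that solver's soundness and completeness with respect to stable models. You instead give a direct proof via the Gelfond--Lifschitz reduct: skolemization grounds everything, the skolem chase has trivial $\sigma_i$ so $A=\bigcup_i F_i$, the blocked condition becomes exactly the reduct's deletion criterion (using safeness to make $\pi_i(B^-_q)$ well defined), and then soundness~$+$~completeness of the derivation is identified with the fixpoint equation $A=\mathrm{lfp}((\{F\}\cup\mathcal R)^A)$. Your route is more self-contained and actually exposes \emph{why} the correspondence holds, whereas the paper's route outsources the work to an external correctness result; both are legitimate, but yours is the more informative argument. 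Your identification of the self-referential nature of blocking (tested against the final $A$, not the current $F_i$) as the one delicate point is apt, and your plan to resolve it through the fixpoint characterization is the right one.
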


\noindent\emph{Sketch of proof:} First part of the claim stems from the fact that existential rules generate a unique branch that corresponds to a derivation. When that branch is complete, it corresponds to a chase. Second part of the claim comes from the fact that our definitions mimic the behavior of the sound and complete algorithm implemented in \cite{asperix09}.$\hfill \Box$

\subsubsection{$C$-chase Tree} The problem with the fixpoint Definition~\ref{def:cstable} is that it does not provide an effective algorithm: at each step of the derivation, we need to know the set produced by that derivation. The algorithm used in the solver ASP\'eRIX  \cite{asperix09} %(that is used for halting skolem chases)
is here generalized to a procedure that generates the (possibly infinite) \emph{$C$-derivation tree} of $(F, \mathcal R)$. All nodes of that tree are labeled by three fields. The field {\sc in} contains the atomset that was inferred in the current branch. The field {\sc out} contains the set of forbidden atomsets, {\it i.e.,} that must not be inferred. Finally, the field {\sc mbt} (``must be true'') contains the atomset that has yet to be proven. A node is called \emph{unsound} when a forbidden atomset has been inferred, or has to be proven, {\it i.e.,} when $\fun{\sc out} \cap (\fun{\sc in} \cup \fun{\sc mbt}) \not= \emptyset$. At the initial step, the root of the $C$-derivation tree is a positive node labeled $(\sigma_0(F), \emptyset, \emptyset)$. Then, let us chose a node $N$ that is not unsound and  has no child. Assume there is a rule $R = B^+, \fun{\bf not}B_1^-, \ldots, \fun{\bf not}B_k^- \rightarrow H$ in $\mathcal R$ such that there is a homomorphism $\pi$ from $B^+$ to $\fun{\sc in}(N)$. Then we will (possibly) add $k+1$ children under $N$, namely $N^+, N_1^-, \ldots, N_k^-$. These children are added if the rule application is not blocked, and produces new atoms. Intuitively, the positive child $N^+$ encodes the effective application of the rule, while the $k$ negative children $N_i^-$ encode the $k$ different possibilities of blocking the rule (with each of the negative bodies). Let us consider the sequence of positive nodes from the root of the tree to $N^+$. It encodes a $\fun{pos}(\mathcal R)$-derivation from $F$. On that derivation, the $C$-chase  generates a sequence $\sigma_0(F), \ldots, \sigma_{p}(F_{p}), S = \sigma(\alpha(\sigma_{p}(F_{p}), \fun{pos}(R), \pi))$. $S$ produces something new when $S \not\subseteq \sigma_{p}(F_{p})$.
We now have to fill the fields of the obtained children: let ({\sc in}, {\sc out}, {\sc mbt}) be the label of a node $N$. Then $\fun{label}(N^+) = (S, \fun{\sc out} \cup \{\pi_i(B^-_1), \ldots, \pi_i(B_k^-)\}, \fun{\sc mbt})$ and
$\fun{label}(N^-_i) = (\fun{\sc in}, \fun{\sc out}, \fun{\sc mbt} \cup \pi_i(B^-_i))$.

We say that a (possibly infinite) branch in the $C$-derivation tree is \emph{unsound} when it contains an unsound node. A sound branch is said to be \emph{complete} when its associated  derivation is complete. Finally, a sound and complete branch is \emph{stable} when for every node $N$ in the branch such that $B^- \in \fun{\sc mbt}(N)$, there exists a descendant $N'$ of $N$ such that $B^- \in \fun{\sc in}(N')$. We say that a branch is \emph{unprovable} if there exists a node $N$ in the branch and an atomset $B^- \in \fun{\sc mbt}(N)$ such that no complete branch containing $N$ is stable. We call a \emph{$C$-chase tree} any $C$-derivation tree for which all branches are either unsound, unprovable or complete.

%As shown by the following property, this procedure effectively computes $C$-stable sets.

\begin{proposition} An atomset $A$ is a $C$-stable set for $(F, \mathcal R)$ iff a $C$-chase tree of $(F, \mathcal R)$ contains a stable branch whose associated derivation produces $A$.
\end{proposition}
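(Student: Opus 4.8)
The plan is to prove both directions of the equivalence between $C$-stable sets (Definition~\ref{def:cstable}) and stable branches of a $C$-chase tree. The key is that a stable branch is designed to be a syntactic, tree-based witness of exactly the conditions (soundness, completeness, and the ``must be true'' obligations being discharged) that define a $C$-stable set. First I would fix a $C$-chase tree $T$ of $(F,\mathcal R)$ and establish the correspondence between the sequence of positive nodes along any branch and an NME $\mathcal R$-derivation from $F$. By construction, each positive child $N^+$ records the result $S = \sigma(\alpha(\sigma_p(F_p), \fun{pos}(R), \pi))$ of one effective rule application in the {\sc in} field, so the positive nodes of a branch spell out a $\fun{pos}(\mathcal R)$-derivation, hence an NME $\mathcal R$-derivation, producing some atomset $A$.

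For the ``if'' direction, I would take a stable branch $\beta$ and let $A$ be the atomset its associated derivation produces; I must show $A$ is $C$-stable, i.e., that the derivation is sound and complete. Soundness follows because $\beta$ is sound: no node is unsound, so $\fun{\sc out}(N) \cap (\fun{\sc in}(N) \cup \fun{\sc mbt}(N)) = \emptyset$ throughout. Since each time $\fun{pos}(R)$ is applied its negative bodies $\pi_i(B^-_q)$ are placed in {\sc out}, the non-unsoundness of every node guarantees that no (appropriately transformed) $\pi_i(B^-_q)$ is ever inferred into {\sc in}; translating the {\sc out}-disjointness along the branch into the ``$\sigma_{i\to j}(\pi_i(B^-_q)) \subseteq \sigma_j(F_j)$ is impossible'' statement of the soundness definition is the crux here, and requires carefully tracking how atoms in {\sc out} are carried and simplified by the $\sigma_j$ maps. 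Completeness of the derivation corresponds to the branch being complete in the tree sense. The stability condition---every $B^- \in \fun{\sc mbt}(N)$ is eventually realized in the {\sc in} field of a descendant---ensures precisely that the blocking obligations recorded when a negative child was (conceptually) chosen are actually met, so that the derivation witnessed by $\beta$ is genuinely a complete sound chase and $A$ is $C$-stable.

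For the ``only if'' direction, I would start from a $C$-stable set $A$ with its complete sound $C$-chase $D$ and reconstruct a branch of a $C$-chase tree producing $A$. At each step where $D$ applies $\fun{pos}(R)$ via $\pi_i$, I follow the positive child $N^+$; the labels update deterministically, feeding $\pi_i(B^-_1),\ldots,\pi_i(B^-_k)$ into {\sc out}. Soundness of $D$ means none of these forbidden atomsets is ever produced in $A$, so every node along the reconstructed branch stays sound. I must also verify that {\sc mbt} is correctly discharged: since $D$ is a sound complete chase, any negative body an application relies on being \emph{absent} does not enter $A$, so the branch never needs to prove a spurious obligation, and the branch is complete. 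The remaining point is to argue that this branch lives in some $C$-chase tree, i.e., that it can be completed to a tree all of whose other branches are unsound, unprovable, or complete; this is an existence argument about extending the tree construction and should follow from the fact that the branching rule is applied exhaustively.

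The main obstacle I expect is the bookkeeping in the soundness argument: the definition of a blocked application speaks of $\pi_i(B^-_q)$ \emph{as transformed by the sequence of simplifications} $\sigma_{i\to j}$, whereas the tree stores the untransformed $\pi_i(B^-_q)$ in {\sc out} and tests membership via $\fun{\sc out} \cap (\fun{\sc in} \cup \fun{\sc mbt})$. Reconciling these---showing that the tree's static {\sc out} test faithfully captures the dynamic, simplification-tracked notion of a negative body being found in the produced atomset---is where the real care is needed, especially for non-idempotent simplifications and for negative bodies that only appear later in the derivation. Because the chase variants used here are local (Definition of local chase), $F'_i \subseteq F'_j$ for $i \le j$, which should let me argue that once an atom is produced it persists, making the correspondence between ``appears later'' and the {\sc out}/{\sc in} interaction manageable; I would lean on locality to close this gap.
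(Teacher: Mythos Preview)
The paper does not prove this proposition in the main text; like several other results, its proof is deferred to the accompanying technical report. So there is no in-paper argument to compare against, and I can only assess your outline on its own merits.

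Your two-direction plan is the natural one, and the correspondence you set up between positive nodes along a branch and an NME $\mathcal R$-derivation is exactly what the construction is designed to give. In the ``only if'' direction, following positive children for the applications recorded in $D$ is correct, and you are right that {\sc mbt} then stays empty so stability is vacuous on that branch; the remaining work is to argue that the siblings (the negative children created at each step) can be developed so that every branch of the resulting tree is unsound, unprovable, or complete. You gesture at this but do not really discharge it; it is not entirely routine, since some of those negative branches may themselves be infinite.

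The more serious issue is in your final paragraph. You propose to reconcile the tree's static {\sc out} test with the dynamic, $\sigma_{i\to j}$-transformed definition of ``blocked'' by invoking locality of the chase. But the proposition is stated for an \emph{arbitrary} criterion $C$, and the paper explicitly says ``All chase variants presented above are local, \emph{except for the core chase}.'' So locality is simply unavailable in the core case, and even for local chases locality only tells you that $F'_i \subseteq F'_j$; it does not by itself guarantee that the untransformed $\pi_i(B^-_q)$ stored in {\sc out} coincides with $\sigma_{i\to j}(\pi_i(B^-_q))$. What actually makes the bookkeeping trivial for oblivious, frontier, skolem, and restricted chase is that their $\sigma_j$ either leave $F_j$ unchanged or reset it to $F'_{j-1}$, so no genuine renaming of surviving atoms occurs. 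For the core chase the endomorphisms $\sigma_j$ can be nontrivial, and you would need to argue that the {\sc out} and {\sc mbt} fields are (implicitly) carried along by the same $\sigma_j$, or else give a separate argument for that case. As written, your proof sketch does not cover core chase, and the fix you propose rests on a premise the paper contradicts.
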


%Let us point out that if a $C$-chase tree contains such a branch, then all $C$-chase trees will contain a branch having the same properties. It is thus possible to compute $C$-stables by generating a single $C$-chase tree.
%

%As a final remark, let us point out that if we consider the $C =$ skolem or $C = $ core, then our $C$-stables correspond to the usual definitions of answer sets.

%The next proposition expresses that the notion of $C$-stable set coincide with the universal model in the positive case and the notion of stable model in the case of normal logic programs.

%
%
%
%
%\begin{property} Let $F$ be an atomset and $\mathcal R$ be a set of existential rules such that the $X$-chase halts on $(F, \fun{pos}(\mathcal R))$ after $n$ derivation steps. Then all $X$-stables for $(F, \mathcal R)$ are obtained by a $X$-derivation tree of depth at most $n$.
%\end{property}
%
%This property is interesting, since it shows that any of the classes ensuring that a chase will halt can be used to ensure that the search of $X$-stables will also halt.

\subsubsection{On the applicability of the chase variants}
%%%%%%%%%%

In the positive case, all chase variants produce equivalent universal models (up to skolemization). Moreover, running a chase on equivalent  knowledge bases produce equivalent results.  Do these semantic properties still hold with nonmonotonic existential rules? The answer is no in general.

The next example shows that the chase variants presented in this paper, core chase excepted, may produce non-equivalent results from equivalent knowledge bases.

\begin{example} Let $F = \{p(a, y), t(y)\}$ and $F' = \{p(a, y'), p(a, y), t(y)\}$ be two equivalent atomsets. Let $R : p(u, v), \fun{\bf not} \, t(v) \rightarrow r(u)$. For any $C$-chase other than core chase, there is a single $C$-stable set for $(F, \{R\})$ which is $F$ (or \emph{sk(F)}) and a single $C$-stable set for $(F', \{R\})$ which is $F' \cup \{r(a)\}$ (or $sk(F') \cup \{r(a)\}$). These sets are not equivalent.
\end{example}

Of course, if we consider that the initial knowledge base is already skolemized (including $F$ seen as a rule), this trouble does not occur with the skolem-chase since there are no redundancies in facts and no redundancy can be created by a rule application. This problem does not arise with core chase either. Thus the only two candidates for processing NME rules are the core chase and the skolem chase (if we assume \emph{a priori }skolemisation, which is already a semantic shift).

%On the one hand, the core chase is more expensive (since at each step of the breadth-first forward chaining there is a redundancy check possibly accompanied by the computation of a core, which can be done with a number of homomorphism checks linear in the number of facts); note however that this does not increase the theoretical complexity of the chase \textbf{(since it is at least EXPTIME ?)}. On the other hand, the core chase allows to keep the original knowledge base and terminates more often than  Skolem chase.

%\textbf{Que se passe-t-il avec le frontier-restricted chase ? A mon avis on doit se retrouver avec le pb de redondance. }

The choice between both mechanisms is important since, as shown by the next example, they may produce different results even when they both produce a \emph{unique} $C$-stable set.  It follows that skolemizing existential rules is not an innocuous transformation in presence of nonmontonic negation.

\begin{example} We consider $F = i(a)$, $R_1 = i(x) \rightarrow p(x,y)$, $R_2 = i(x) \rightarrow q(x,y)$, $R_3 = q(x,y) \rightarrow p(x,y), t(y)$ and $R_4 = p(u,v), \fun{\bf not} \, t(v) \rightarrow r(u)$.
The core chase produces at first step $p(a,y_0)$ and $q(a,y_1)$, then  $p(a,y_1)$ and $t(y_1)$ and removes the redundant atom $p(a,y_0)$, hence $R_4$ is not applicable. The unique core-stable set is $\{i(a), q(a,y_1), p(a,y_1), t(y_1) \}$. With the skolem chase, the produced atoms are $p(a,f^{R_1}(a))$ and $q(a,f^{R_2}(a))$, then $p(a,f^{R_2}(a))$ and $t(f^{R_2}(a))$. $R_4$ is applied with $p(u,v)$ mapped to $p(a,f^{R_1}(a))$, which produces $r(a)$. These atoms yield a unique skolem-stable set. These stable sets are not equivalent.
\end{example}

%{\bf A VERIFIER}
%
%The relationships between both kinds of chase applied to nonmonotonic existential rules can be specified as follows:  \emph{(1)} For each answer set $S$ of the core chase, there is an answer set $S'$ of Skolem chase with a homomorphism from $S$ to $S'$; \emph{(2)} Skolem chase may produce strictly more answer sets than the core chase, even infinitely many more.

\section{Termination of the Chase Tree}

\subsubsection{On the finiteness of $C$-chase trees}

We say that the \emph{$C$-chase-tree halts} on $(F, \mathcal R)$ when there exists a finite $C$-chase tree of $(F, \mathcal R)$ (in that case, a breadth-first strategy for the rule applications will generate it). We can thus define \emph{$C$-stable-finite} as the class of sets of nonmonotonic existential rules $\mathcal R$ for which the $C$-chase-tree halts on any $(F, \mathcal R)$. Our first intuition was to assert ``if $\fun{pos}(\mathcal R) \in$ $C$-finite, then $\mathcal R \in$ $C$-stable-finite''. However, this property is not true in general, as shown by the following example:

\begin{example}
Let $\mathcal R = \{R_1, R_2\}$ where $R_1 = h(x) \rightarrow p(x, y), h(y)$ and $R_2 = p(x, y), \fun{\bf not} \, h(x) \rightarrow p(x, x)$. See that $\fun{pos}(\mathcal R) \in$ core-finite (as soon as $R_1$ is applied, $R_2$ is also applied and the loop $p(x, x)$ makes any other rule application redundant); however the only core-stable set of $(\{h(a)\}, \mathcal R)$ is infinite (because all applications of $R_2$ are blocked).
\end{example}

The following property  shows that the desired property is true for \emph{local} chases.

\begin{proposition} Let $\mathcal R$ be a set of NME rules and $C$ be a local chase. If $\fun{pos}(\mathcal R) \in$ $C$-finite, then $\mathcal R \in$ $C$-stable-finite.
\end{proposition}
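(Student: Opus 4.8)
The plan is to bound the $C$-chase tree by means of K\"onig's lemma: since the tree is finitely branching, it is finite as soon as it has no infinite branch, so the whole argument reduces to showing that every branch is finite. First I would check the finite branching. At each step the construction selects a single rule $R$ together with a single homomorphism $\pi$ from its positive body into $\textsc{in}(N)$, and creates at most the $k+1$ children $N^+, N_1^-, \ldots, N_k^-$, where $k$ is the number of negative bodies of $R$. As $\mathcal R$ is finite, the integer $1 + \max_{R \in \mathcal R} k_R$ bounds the out-degree of every node, so finiteness of the tree is equivalent to the absence of an infinite branch.

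Next I would fix an arbitrary branch $\beta$ and bound its positive nodes. The $\textsc{in}$ field is modified only at positive nodes, and each positive child is created only when it ``produces something new'', i.e. $S \not\subseteq \sigma_p(F_p)$, so $\textsc{in}$ strictly grows there; since $C$ is local, $\textsc{in}$ is moreover non-decreasing all along $\beta$. Hence the subsequence of positive nodes of $\beta$ induces a $\fun{pos}(\mathcal R)$-derivation from $F$ that respects the criterion $C$, namely the derivation whose successive atomsets are the values of $\textsc{in}$ after each positive node. The crucial observation is that this is a genuine $C$-derivation of the positive rule set, to which the hypothesis $\fun{pos}(\mathcal R) \in C\text{-finite}$ applies, forbidding it to be infinite. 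Locality is exactly what makes this reduction legitimate: atoms created by earlier applications are never deleted, so the positive part of $\beta$ cannot rely on the kind of collapse that a non-local chase (such as the core chase) exploits and that the negation may prevent — this being precisely the failure illustrated by the counterexample preceding the proposition. I would conclude that $\beta$ contains only finitely many positive nodes.

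It then remains to bound the stretches of $\beta$ along which $\textsc{in}$ is constant. On such a stretch $\textsc{in}$ is a fixed finite atomset, so there are only finitely many pairs $(R,\pi)$ with $\pi$ a homomorphism from a positive body into it; each such pair is moreover selected at most once, since re-selecting it while $\textsc{in}$ is unchanged would produce nothing new and create no child. Therefore only finitely many nodes occur between two consecutive positive nodes (and after the last one). Combining this with the finiteness of the positive nodes, $\beta$ is finite; as $\beta$ was arbitrary, the tree has no infinite branch and is finite, and a breadth-first generation terminates, which is exactly $\mathcal R \in C\text{-stable-finite}$.

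The main obstacle is the middle step, namely arguing rigorously that the positive subsequence of a branch is a $C$-derivation covered by the termination hypothesis. For the oblivious, frontier and skolem chases this is transparent, because the finite universal model bounds the set of atoms that can ever be produced, so no $C$-derivation can be infinite; the delicate case is the restricted chase, where termination may depend on the order of applications and where one must ensure that skipping the blocked applications still yields a valid (useful) $C$-derivation to which $C$-finiteness applies. Locality is precisely the ingredient that keeps this reduction sound.
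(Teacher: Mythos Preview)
The paper relegates this proof to a technical report, so there is nothing to compare against directly; I assess your argument on its own merits. The overall K\"onig strategy --- finite branching, then bound each branch --- is the natural one and almost certainly matches the intended proof.

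There are, however, two points where your argument is not yet complete.

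\emph{Positive nodes.} You correctly isolate the crux: the positive subsequence of a branch is a $C$-derivation of $\fun{pos}(\mathcal R)$, and you need this to be finite. But the paper's definition of $C$-finite refers to the termination of \emph{the} breadth-first $C$-chase, not of arbitrary $C$-derivations. For the skolem (and frontier) chase your argument is fine: every derivation produces a subset of the canonical finite model. For the oblivious chase a renaming argument works. For the restricted chase you write that ``locality is precisely the ingredient that keeps this reduction sound'', but you never say how. Locality only tells you that the \textsc{in}-sets are increasing; it does not, by itself, give a uniform bound on every restricted derivation once some breadth-first run terminates. Either you should prove the lemma ``$C$ local and $\fun{pos}(\mathcal R)$ $C$-finite $\Rightarrow$ every $C$-derivation from $F$ is finite'', or restrict your claim to the chases for which your argument actually goes through.

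\emph{Negative nodes.} Your justification that ``re-selecting $(R,\pi)$ while $\textsc{in}$ is unchanged would produce nothing new and create no child'' is false: since \textsc{in} is the same and the application was productive before, it is still productive, and the construction \emph{does} add children. What you really need is that you are free to \emph{build} the tree without re-selecting --- which is true, since $C$-stable-finiteness only asks for the \emph{existence} of a finite $C$-chase tree. But then you must verify that the tree you build this way is actually a $C$-chase tree, i.e., that every maximal branch ends up unsound, unprovable, or complete. A branch that exhausts all $(R,\pi)$ pairs via negative children while \textsc{in} stays fixed is not complete (those rules are still applicable and productive), so you need to argue it is unsound or unprovable. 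That step is missing.
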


We have previously argued that the only two interesting chase variants w.r.t. the desired semantic properties are skolem
and core. However, the core-finiteness of the positive part of a set of NME rules does not ensure the core-stable-finiteness of these rules. We should point out now that if $C \geq C'$, then $C'$-stable-finiteness implies $C$-stable-finiteness. We can thus ensure core-stable-finiteness when $C$-finiteness of the positive part of rules is ensured for a local $C$-chase.

\begin{proposition} Let $\mathcal R$ be a set of NME rules and $C$ be a local chase. If $\fun{pos}(\mathcal R) \in$ $C$-finite, then $\mathcal R \in$ core-stable-finite.
\end{proposition}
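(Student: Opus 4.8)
The plan is to obtain this statement as an immediate corollary of the two results that precede it, rather than by a fresh inductive construction on the $C$-chase tree. First I would invoke the immediately preceding proposition: since $C$ is a \emph{local} chase and $\fun{pos}(\mathcal R) \in C$-finite by hypothesis, that proposition gives directly $\mathcal R \in C$-stable-finite. This is the only place where locality of $C$ is used; note that it is essential here, because the core chase itself is not local, so we cannot simply apply the preceding proposition with $C$ taken to be core.

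The second step is to climb from $C$-stable-finiteness up to core-stable-finiteness. Here I would use the two facts recalled earlier in the paper: (i) the core chase is the \emph{strongest} chase variant, i.e. core $\geq C$ for every chase $C$, since the core is the minimal representative of its equivalence class; and (ii) the monotonicity remark stating that if $C' \geq C$ then $C$-stable-finiteness implies $C'$-stable-finiteness. Instantiating the stronger criterion by core and the weaker one by our local $C$, fact (ii) yields the inclusion $C$-stable-finite $\subseteq$ core-stable-finite. Chaining this with the membership obtained in the first step gives $\mathcal R \in$ core-stable-finite, which is exactly the claim.

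The argument is essentially bookkeeping, so the only point that needs genuine care—and the closest thing to an obstacle—is getting the \emph{direction} of the monotonicity implication right and confirming it applies to an arbitrary local $C$, not merely to the named variants (oblivious, frontier/skolem, restricted). One should check that we really do start from stable-finiteness of the \emph{weaker} chase and transfer it upward to the strongest, core; this matches the quantifier in the monotonicity remark, since core $\geq C$. If one preferred not to cite that remark, an alternative (longer) route would be to re-run the proof of the preceding proposition verbatim with the core chase substituted for $C$, verifying that a finite $C$-chase tree can be turned into a finite core-chase tree by replacing each simplification $\sigma_i$ with core computation; but the corollary-style argument above is shorter and reuses the already-established machinery, so I would present that.
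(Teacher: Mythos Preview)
Your proposal is correct and follows exactly the route the paper takes: it states the proposition immediately after the remark that $C \geq C'$ implies $C'$-stable-finiteness $\Rightarrow$ $C$-stable-finiteness, and derives it by combining that monotonicity (instantiated with core $\geq C$) with the preceding proposition for local $C$. Your caution about the direction of the monotonicity implication is well placed and you have it right.
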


We can rely upon all acyclicity results in this paper to ensure that the core-chase tree halts.

\subsubsection{Improving finiteness results with negative bodies}

%With the previous property, we have explored conditions that ensure $C$-chase-tree finiteness by considering \emph{only} the positive parts of rules. We discuss here different ways to improve these results by considering also the negative bodies.

%A first improvement is to consider the notion of positive reliance \cite{Magka et al., IJCAI 2013}, that we recast here in our framework.
%We say that a rule is
%
%
%
We now explain how negation can be exploited to enhance preceding acyclicity notions. We first define the notion of {\em self-blocking rule},
which is a rule that will never be applied in any derivation. A rule $B^+, \fun{\bf not}\, B^-_1, \ldots, \fun{\bf not} \, B_k^-$ is self-blocking if there is
a negative body $B^-_i$ such that $B^-_i \subseteq (B^+ \cup H)$. Such a rule will never be applied in a sound way, so will never
produce any atom. It follows that:

\begin{proposition}
Let $\mathcal R'$ be the non-self-blocking rules of $\mathcal R$. If $\fun{pos}(\mathcal R') \in$ $C$-finite and $C$ is local, then $\mathcal R \in$ $C$-stable-finite.
\end{proposition}

This idea can be further extended. We have seen for existential rules that if $R'$ depends on $R$, then there is a unifier $\mu$
of $\fun{body}(R')$ with $\fun{head}(R)$, and we can build a rule $R'' = R \unified_\unifier R'$ that captures the sequence of
applications encoded by the unifier.
We extend Def.~\ref{def:unified-rule} to take into account negative bodies: if $B^-$ is a negative body of $R$ or $R'$, then $\mu(B^-)$
is a negative body of $R''$. We also extend the notion of dependency in a natural way, and say that a unifier $\mu$ of $\fun{head}(R)$ with  $\fun{body}(R')$ is self-blocking
when $R \unified_\unifier R'$ is self-blocking, and $R'$ \emph{depends} on $R$ when there exists a unifier of $\fun{head}(R)$ with $\fun{body}(R')$ that is not self-blocking.
This extended notion of dependency exactly corresponds to the \emph{positive reliance} in \cite{magka2013computing}.

\begin{example} Let $R =  q(x), \fun{\bf not} \, p(x) \rightarrow r(x, y)$ and $R' = r(x, y) \rightarrow p(x), q(y)$. Their associated positive rules are not core-finite. There is a single unifier $\mu$ of $R'$ with $R$, and $R \unified_\unifier R': q(x), \fun{\bf not} \, p(x) \rightarrow r(x, y), p(x), q(y)$ is self-blocking. Then the skolem-chase-tree halts on $(F, \{R, R'\})$ for any $F$.
\end{example}

Results obtained from positive rules can thus be generalized by considering this extended notion of dependency
(for $\upg$ we only encode non self-blocking unifiers). Note that it does not change the complexity of the acyclicity tests.

%{\bf Swan: tu avais repris la def de [magka2013] mais elle marche pour un ensemble de body neg atomiques,
%ce qui n'est pas la cas ici. D'ou la modif.}

%
%\begin{definition}[Self-blocking rule]
%\label{def:useless-rule}
%	Let $R = (B^+,B^-,H)$ be a nonmonotonic rule.
%	$R$ is self-blocking if $B^- \cap (B^+ \cup H) \neq \emptyset$.
%\end{definition}

%If there is a unifier $\unifier$ of $B_j$ with $H_i$, such that $R_i \unified_\unifier R_j$ is not self-blocking,
%then $R_j$ depends on $R_i$.
%Note that this is another way to express the \emph{positive reliance} in \cite{magka2013computing}.
%
%From this definition, one can further improve acyclicity results at no additional cost by verifying
%that a marked cycle in $\upg$ corresponds to a compatible unifier sequence, and that
%the associated unified rule is not self-blocking.

%\subsubsection{Propagating Contradictions}

We can further generalize this and check if a unifier sequence is self-blocking,
thus extend the $Y^{U+}$ classes to take into account negative bodies.
Let us consider a compatible cycle $C$ going through $\vpos{a}{i}$  that has not been proven safe. Let
$C_\mu$ be the set of all compatible unifier sequences induced by $C$.
 We say that a sequence $\mu_1 \ldots \mu_k \in \mathcal C_\mu$ is self-blocking when the rule
$R_1 \unified_{\mu_1} R_2 \ldots R_k \unified_{\mu_k} R_{k+1}$ obtained by combining these unifiers is self-blocking.
When all sequences in $C_\mu$ are self-blocking, we say that $C$ is also self-blocking. This test comes again at no additional computational cost.

\begin{example}
\label{ex:selfblock-unifier}
Let $R_1 = q(x_1), {\bf not} p(x_1) \rightarrow r(x_1,y_1)$, $R_2 =  r(x_2,y_2) \rightarrow s(x_2,y_2)$,
$R_3 = s(x_3,y_3) \rightarrow p(x_3), q(y_3)$.
$PG^{U+}(\{R_1,R_2, R_3\})$ has a unique cycle, with a unique induced  compatible unifier sequence. The rule
$R_1 \unified R_2 \unified R_3 = q(x_1), {\bf not} p(x_1) \rightarrow r(x_1,y_1), s(x_1,y_1), p(x_1), q(y_1)$
 is self-blocking, hence $R_1 \unified R_2 \unified R_3 \unified R_1$ also is. Thus,  there is no ``dangerous" cycle.
\end{example}

\begin{proposition}
\label{prop:neg-correct}
If, for each existential position $\vpos{a}{i}$, all compatible cycles for $\vpos{a}{i}$ in $PG^U$ are self-blocking, then
the stable computation based on the skolem chase halts.
\end{proposition}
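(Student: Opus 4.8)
The plan is to argue by contradiction on the finiteness of the skolem-chase tree, reducing non-halting to the existence of an infinite \emph{sound} branch, and then using the self-blocking hypothesis to turn such a branch into an \emph{unsound} one. First I would observe that the tree is finitely branching: at each non-unsound leaf only finitely many rules and finitely many homomorphisms of their positive bodies into the current $\fun{in}$ field are available, so only finitely many children are created. By K\"onig's lemma it then suffices to show that the tree has no infinite branch. A branch containing an unsound node is never expanded past it, hence is finite; and a finite sound branch has no applicable non-blocked rule producing new atoms at its leaf, hence is complete. So the whole burden is to rule out \emph{infinite sound} branches.

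Suppose for contradiction that some branch $\beta$ is infinite and sound. Its sequence of positive nodes encodes an infinite sound $\fun{pos}(\R)$-derivation $D$ computed by the skolem chase, producing an atomset $A$. Since a positive child is only created when it produces a new atom, $A$ is infinite; and because the vocabulary has finitely many predicates of bounded arity and finitely many constants, $A$ must contain infinitely many skolem terms. Hence some existential position $\vpos{a}{i}$ receives unboundedly many instantiations along $D$, i.e.\ it is infinite. By correctness of the transition edges of $\UPG{\R}$ (the property on which Theorem~\ref{theo:correct} relies), $\vpos{a}{i}$ lies on a cycle $C$ of $\UPG{\R}$. The way the successive rule applications of $D$ chain head atoms into body atoms realizes, around $C$, a concrete sequence of unifiers; since these atoms are genuinely used by applications of $D$, Proposition~\ref{prop:inc-unif-noapp} forbids any incompatible unifier, so this realized sequence is compatible and $C$ is a \emph{compatible cycle} for $\vpos{a}{i}$.

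Now I would invoke the hypothesis: every compatible cycle for an existential position is self-blocking, so $C$ is self-blocking and in particular the sequence $s = \mu_1 \ldots \mu_k$ realized by $D$ is self-blocking. Thus the unified rule $R^\ast = R_1 \unified_{\mu_1} R_2 \ldots \unified_{\mu_k} R_{k+1}$ has a negative body $B^-$ with $B^- \subseteq B^+(R^\ast) \cup H(R^\ast)$. The key point is that the subderivation of $D$ realizing $s$ is precisely an application of an instance of $R^\ast$: the atoms of its positive body are matched in $A$ and the atoms of its head are produced into $A$. Consequently the corresponding instance of $B^-$, being contained in $B^+(R^\ast) \cup H(R^\ast)$, is itself present in $A$. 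Since $B^-$ is a negative body of one of the rules $R_j$ applied in this subderivation, its presence in $A$ blocks that application of $\fun{pos}(R_j)$, so $\beta$ is unsound --- contradicting our assumption. Therefore no infinite sound branch exists, the skolem-chase tree is finite, and the stable computation based on the skolem chase halts.

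I expect the main obstacle to be the faithful bookkeeping in the third paragraph: matching the purely syntactic condition $B^- \subseteq B^+(R^\ast) \cup H(R^\ast)$ on the abstract unified rule with the concrete derivation. One has to track how the unifiers $\mu_\ell$ and the skolemization jointly instantiate frontier and existential variables through the whole sequence, verify that every atom of the chosen $B^-$ really materializes --- under the simplifications $\sigma_{i \to j}$ --- in the produced atomset used by the blocking test behind Definition~\ref{def:cstable}, and check that a single traversal of $C$ already suffices to trigger the block. Locality of the skolem chase is what keeps this tractable, since there $A$ is simply the union of the $\sigma_i(F_i)$ and no produced atom is ever retracted.
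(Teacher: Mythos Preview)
The paper does not include a proof of this proposition (it is one of the results deferred to the technical report), so there is no reference argument to compare against; I am evaluating your sketch on its own merits.

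The overall architecture --- K\"onig on a finitely branching tree, reduction to an infinite sound branch, then showing such a branch contradicts the self-blocking hypothesis --- is the right one, and your use of Proposition~\ref{prop:inc-unif-noapp} to ensure compatibility of the realized unifiers is appropriate. Two points deserve more care.

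\emph{First, a small unaddressed point.} You assert that the sequence of positive nodes along an infinite branch is itself infinite, but the tree definition in the paper does not forbid expanding a negative child with the very same (rule, homomorphism) pair as its parent; a naive construction can thus produce an infinite branch with constant $\fun{in}$. You should make explicit the fairness / breadth-first assumption under which the $C$-chase tree is built (the paper only hints at it), or argue directly that with bounded $\fun{in}$ only finitely many distinct expansions are possible and hence any fair tree has only finite branches with bounded $\fun{in}$.

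\emph{Second, the step you flag is a genuine gap, not just bookkeeping.} You go from ``$\vpos{a}{i}$ lies on a cycle $C$ of $\UPG{\R}$'' (which is what correctness of transition edges gives) to ``the derivation $D$ realizes, around $C$, a concrete sequence of unifiers''. These are different statements: correctness guarantees the existence of \emph{some} cycle through $\vpos{a}{i}$, not that \emph{this} cycle is the one traversed by $D$. What you actually need is an argument on the \emph{ancestry} of skolem terms: pick a skolem term $t$ created at $\vpos{a}{i}$ of sufficiently large nesting depth, follow the chain of rule applications that produced the atoms used to create $t$, and extract from this finite chain a subsequence $R_{j_1},\ldots,R_{j_k},R_{j_{k+1}}$ with $R_{j_{k+1}} = R_{j_1}$ whose successive head/body overlaps are (instances of) unifiers inducing a cycle through $\vpos{a}{i}$ in $\UPG{\R}$. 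A pigeonhole argument on the finitely many existential positions, applied along this ancestry chain, gives the repetition you need. Only once this concrete subsequence is in hand does the self-blocking of the corresponding unified rule $R^\ast$ give an instance of some $B^-$ inside $A$ and hence an unsound application. Without this extraction, the contradiction in your third paragraph is asserted rather than derived.
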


\section{Conclusion}
We have revisited chase termination with several results. First, a new tool that allows to unify and extend most existing acyclicity conditions, while keeping good computational properties. Second, a chase-like mechanism for nonmonotonic existential rules under stable model semantics, as well the extension of acyclicity conditions to take negation into account. This latter contribution extends the notion of negative reliance of \cite{magka2013computing}; and does not rely upon stratification (and thus does not enforce the existence of a single stable model).

This work will be pursued on the theoretical side by a complexity study of {\sc entailment} for the new acyclic classes and  by a deeper study of logical foundations for NME rules, since it remains to relate our core-stable sets to an existing first-order semantics for general NME rules.

\section{ Acknowledgements}

We thank the reviewers for their comments. This work is part of the ASPIQ and Pagoda projects and was partly funded by the french \emph{Agence Nationale de la Recherche} (ANR) grants ANR-12-BS02-0003 and ANR-12-JS02-0007.

%\section{Algorithm}
%\input{src/content/algoEASP.tex}

%\appendix
%\input{src/content/appendix.tex}
\bibliography{bib}
\bibliographystyle{aaai}

\end{document}